\documentclass[11pt,letterpaper]{article}
\usepackage[top=1in, bottom=1in, left=1in, right=1in]{geometry}

\usepackage[utf8]{inputenc} 
\usepackage[T1]{fontenc}    
\usepackage{lmodern}
\usepackage{hyperref}       
\usepackage{url}            
\usepackage{booktabs}       
\usepackage{amsfonts}       
\usepackage{nicefrac}       
\usepackage{microtype}      
\usepackage{xcolor}         
\usepackage{natbib}
 
\usepackage{amsmath, amsthm, amssymb}
\usepackage{bm}
\usepackage{color}
\usepackage{ulem}

\let\emph\textit


\def\R{{\mathbb{R}}}

\def\1{\textbf{1}}

\newcommand{\onevct}{\bm{1}}

\newcommand{\onemtx}{\onevct\onevct^\top}


\newtheorem{theorem}{Theorem}

\newtheorem{lemma}{Lemma}
\newtheorem{definition}{Definition}
\newtheorem{claim}{Claim}



 

\newcommand{\Prob}[2][]{\mathbb{P}_{#1}\left\{ {#2} \right\}}
\newcommand{\Expect}[2][]{\mathbb{E}_{#1}\left[ #2 \right]}


\newcommand{\abs}[1]{\left\vert {#1} \right\vert}

\newcommand{\norm}[1]{\left\Vert {#1} \right\Vert}

\newcommand{\sgn}[1]{\operatorname*{sgn}\left({#1}\right)}

\newcommand{\diag}[1]{\operatorname*{diag}\left({#1}\right)}
\newcommand{\tr}[1]{\operatorname*{tr}\left({#1}\right)}

\newcommand{\argmax}{\operatorname*{arg\; max}}

\newcommand{\st}{\operatorname*{subject\; to}}
\newcommand{\maximize}{\operatorname*{maximize}}
\newcommand{\minimize}{\operatorname*{minimize}}


\newcommand{\lsec}{\lambda_{2}}
\newcommand{\lperp}{\lambda_{\onevct^\perp}}

\newcommand{\inprod}[2][]{\left\langle {#1},{#2} \right\rangle}

\newcommand{\kth}{{(k)}}


\newcommand{\yyast}{y^\ast y^{\ast\top}}
\usepackage{silence}
\usepackage{enumitem}
\usepackage{multibib} 

\usepackage{algorithm}
\usepackage{algorithmic}
\usepackage{graphicx}
\usepackage{subcaption}
\usepackage{multicol}
\usepackage[font=small,labelfont=bf]{caption}

\makeatletter
\def\algbackskip{\hskip-\ALG@thistlm}
\makeatother

\title{Federated Myopic Community Detection with One-shot Communication}

\author{
  \textbf{Chuyang Ke}\\Department of Computer Science\\Purdue University\\\texttt{cke@purdue.edu}
  \and 
  \textbf{Jean Honorio}\\Department of Computer Science\\Purdue University\\\texttt{jhonorio@purdue.edu}
}

\date{}
\begin{document}
\maketitle

\begin{abstract}
  In this paper, we study the problem of recovering the community structure of a network under federated myopic learning. 
  Under this paradigm, we have several clients, each of them having a myopic view, i.e., observing a small subgraph of the network. 
  Each client sends a censored evidence graph to a central server. 
  We provide an efficient algorithm, which computes a consensus signed weighted graph from clients evidence, and recovers the underlying network structure in the central server. 
  We analyze the topological structure conditions of the network, as well as the signal and noise levels of the clients that allow for recovery of the network structure. 
  Our analysis shows that exact recovery is possible and can be achieved in polynomial time. 
  We also provide information-theoretic limits for the central server to recover the network structure from any single client evidence.
  Finally, as a byproduct of our analysis, we provide a novel Cheeger-type inequality for general signed weighted graphs.
\end{abstract}

\allowdisplaybreaks

\section{Introduction}
Modern social networks have underlying community structures \citep{javed2018community}. Take Twitter as an example: one may assume that Twitter users with similar political views are more likely to interact with each other.
\emph{Community detection} is the task of identifying underlying group structures in a network given observation of node interactions \citep{kelley2012defining}.
At the same time, modern social networks exhibit two opposite defining properties: big volume, small neighborhoods. In the Twitter example, more than $1.3$ billion Twitter accounts have been created as of the year 2019, yet the average number of followers is $707$ \citep{smith_2020}. Thus when studying modern social networks, it is impractical and inefficient to collect all data. 
Furthermore, due to growing privacy concerns and regulations, it is unfavorable for a central server to collect all users' information.

As an emerging technique in the machine learning community, \emph{federated learning} tries to address the issues discussed above. The idea of federated learning is not limited to any particular learning algorithm; rather, it is a learning paradigm, under which a central server trains a high-quality learning model with the coordination of a federation of participating clients \citep{konevcny2016federated}.  
In a typical federated learning setting, it is often assumed that the participating clients are large in number. At the same time, each client has a very small and non-i.i.d. (independent and identically distributed) dataset, very limited computational power, and very restricted communication capabilities. An example is mobile phones. Due to security reasons and storage limits, each mobile phone can only access data that are related to the device. The computational capability of each mobile phone is limited by its processor and battery power, thus it is not practical to run large-scale algorithms on each device. Furthermore, the communication between mobile phones and a central server may be restricted due to connection quality, privacy concerns, and government censorship.

In this paper we study the problem of community detection under a federated learning paradigm. We focus on a myopic setting, in which every client has a limited and non-identical access to the network. Each client observes a small subgraph of the whole network, and sends a censored evidence graph to a central server. The server then computes a consensus graph from the evidence sent by the clients, and recovers the underlying community structure of the full network. To model community interaction we adopt a generative approach similar to the one in the stochastic block model (SBM). In other words, we assume nodes from the same group are more likely to be connected than those from different groups. We try to answer the following questions:
\begin{itemize}
    \item Does there exist an efficient central server algorithm that takes the censored clients' local observation as the input, and recovers the underlying community structure of the \emph{full} network?
    \item Under what topological and statistical conditions, the central algorithm will work correctly?
\end{itemize}

The feasibility of efficient and correct community detection depends on the topological structure, as well as the signal and noise level of the network. The latter is intuitive: as in the case of SBM, if in-group interaction is dense and across-group interaction is sparse, the community structure is clear and thus recovery of the group membership can be achieved. 
On the other hand, the topological structure of the network is a unique challenge under the federated learning paradigm. By ``network topology'', we refer to the union of the subgraphs observed by all clients and the related properties. 

In community detection tasks, one of the most important properties is the edge expansion. In general, for undirected unweighted graphs, the edge expansion measures how connected every component of the network is. It is well-known that the Cheeger constant can be used to measure the edge expansion property of a graph.
The edge expansion is important for successful community detection. For example, if the subgraphs observed by all clients do not have any intersection at all, arguably it is not possible to correctly recover the full community structure, as there is no observed community interaction between nodes in different subgraphs (see Figure \ref{fig:link} for illustration). 
Similarly, if the union of the subgraphs is a chain graph or a star graph, it has a ``bottleneck''. In this case, removing very few edges will disconnect the graph (see Figure \ref{fig:star} and \ref{fig:star2} for illustration).
We find that the correctness of community detection heavily depends on the bottleneck edges. 
In our analysis we propose a novel Cheeger-type constant, which characterizes the edge expansion property of general signed weighted graphs (that is, weighted graphs with potentially negative weights). We show that the Cheeger-type constant of the server consensus graph is critical for correct recovery of the underlying community structure, along with the signal and noise level parameters.


\begin{figure*}[ht!]
    \centering
    
    \begin{subfigure}[b]{.25\linewidth}
    \centering
    \includegraphics[width=0.65\linewidth]{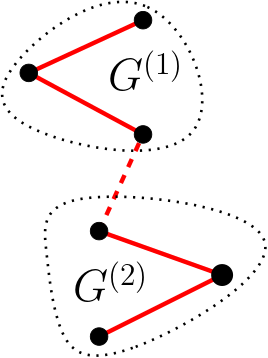}
    \caption{Chain graph (smallest edge expansion) }
    \label{fig:link}
    \end{subfigure}
    \hspace{10mm}
    \begin{subfigure}[b]{.25\linewidth}
    \includegraphics[width=\linewidth]{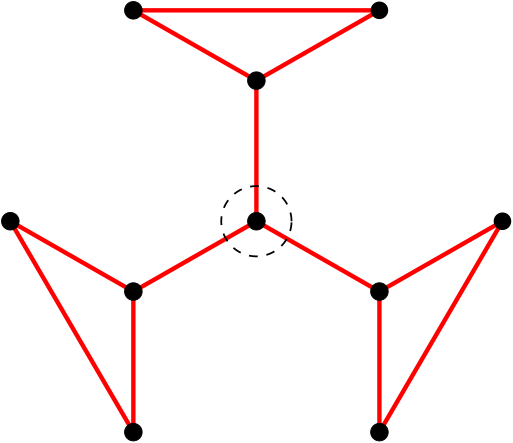}
    \caption{Star graph (median edge expansion)}
    \label{fig:star}
    \end{subfigure}
    \hspace{10mm}
    \begin{subfigure}[b]{.25\linewidth}
    \includegraphics[width=\linewidth]{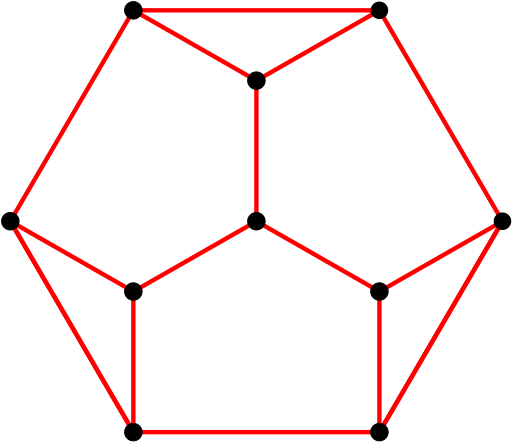}
    \caption{Regular graph (largest edge expansion)}
    \label{fig:star2}
    \end{subfigure}
    
    \caption{Graph topology examples. Figure \ref{fig:link} shows a chain graph, where two clients observe subgraph $G^{(1)}$ and $G^{(2)}$, respectively. In this case it is impossible for any algorithm to recover the full network structure, since there is no observed interaction between the two subgraphs.  
    Figure \ref{fig:star} shows a star graph, in which every edge connecting the highlighted node in the center is a ``bottleneck,'' i.e., removing any of the three edges disconnects the graph.
    Figure \ref{fig:star2} shows a $3$-regular graph, in which every component is connected with no ``bottleneck.''}
    \label{figs:edge_expansion}
    \end{figure*}
    

\textbf{Related Works.}
As a rapidly growing research area, federated learning was formally defined in \citet{konevcny2015federated, konevcny2016federated} as a general machine learning setting. 
Recently, both \citet{li2020federated} and \citet{yang2019federated} provided an overview of federated learning, and interestingly, both overviews highlighted that one main challenge in federated learning is the statistical heterogeneity of clients' local data. 
The framework of federated learning has been applied to various problems such as deep networks \citep{mcmahan2017communication}, principal component analysis \citep{Grammenos20} and sparse linear regression \citep{barik2020exact}, to mention a few.
On the other hand, federated learning for community detection tasks has not been studied yet. 
\citet{mercado2018power} studied a tangentially related problem called multilayer graph clustering, however the topological structure in that case is much simpler, and the weights are assumed to be nonnegative.
To the best of our knowledge, we are providing the first community detection model under a federated myopic learning setting, where the clients observe a non-identical small subgraph of the full network.

There are several different generalizations of signed graphs and the Laplacian \citep{bronski2014spectral}. One variation studied in \citet{kunegis2010spectral,cucuringu2019sponge, knyazev2017signed} is the so-called signed degree matrix. The idea is that instead of summing all edge weights directly, they sum over the absolute value of the edge weights, and their signed Laplacian matrix can be defined in a similar way. Readers should be aware that this is different from our approach, and more importantly, only our approach fulfills the Karush–Kuhn–Tucker (KKT) conditions required by our algorithm.   
There are also some works on the application of signed networks, for example, \citet{doreian2009partitioning} proposed a heuristic algorithm to partition signed social networks, and \citet{giotis2006correlation} proposed an approximation algorithm to maximize agreements in a graph. Our theorems focus on the spectral analysis of the signed weighted graph, and provide provable theoretical guarantees for efficient exact recovery. 

\textbf{Summary of Our Contribution.} Our work is mostly theoretical. We provide a series of novel results in this paper:
\begin{itemize}
    \item We introduce a highly-general federated myopic learning framework with one-shot communication for community detection tasks. Under this paradigm, every client observes a small subgraph of the full network and sends a censored evidence graph to a central server. The central server computes a consensus signed weighted graph, and recovers the underlying network structure.
    \item We provide provable theoretical guarantees for efficient exact recovery of the community structure. We analyze the topological structure conditions of the consensus graph, as well as the signal and noise levels of the clients that allow for recovery of the community structure. We establish the regime in which exact recovery is possible and can be achieved in polynomial time. We also provide information-theoretic limits for any algorithm to recover the local community structure from any single client evidence. 
    \item We propose a novel Cheeger-type inequality for general signed weighted graphs with potentially negative weights, from a graph theoretical point of view. The inequality relates the eigenvalue gap of a signed weighted graph to a Cheeger-type constant.
\end{itemize}
\section{Our Novel Problem Formulation}


\begin{figure*}[ht!]
\centering
\captionsetup[subfigure]{labelformat=empty}

\begin{subfigure}[b]{.25\linewidth}
\includegraphics[width=\linewidth]{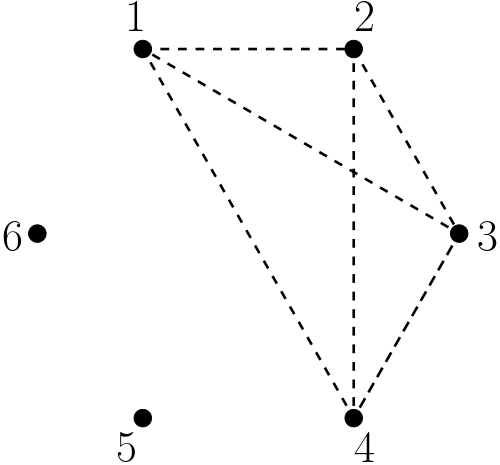}
\caption{$\Omega^{(1)}$: Field of view of client $1$}
\label{fig:e1}
\end{subfigure}
\hspace{10mm}
\begin{subfigure}[b]{.25\linewidth}
\includegraphics[width=\linewidth]{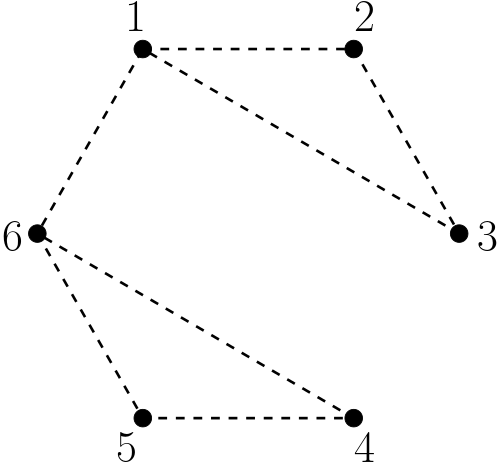}
\caption{$\Omega^{(2)}$: Field of view of client $2$}
\label{fig:e2}
\end{subfigure}
\hspace{10mm}
\begin{subfigure}[b]{.25\linewidth}
\includegraphics[width=\linewidth]{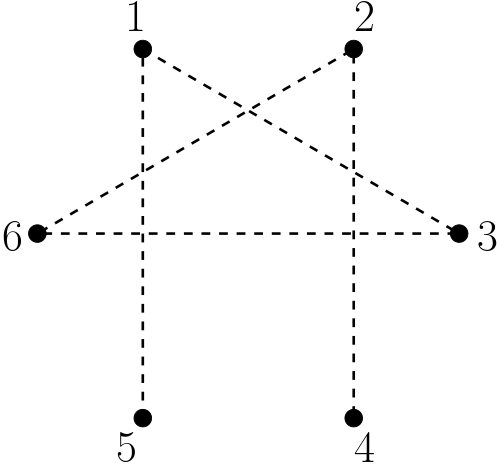}
\caption{$\Omega^{(3)}$: Field of view of client $3$}
\label{fig:e3}
\end{subfigure}

\par\bigskip

\begin{subfigure}[b]{.25\linewidth}
\includegraphics[width=\linewidth]{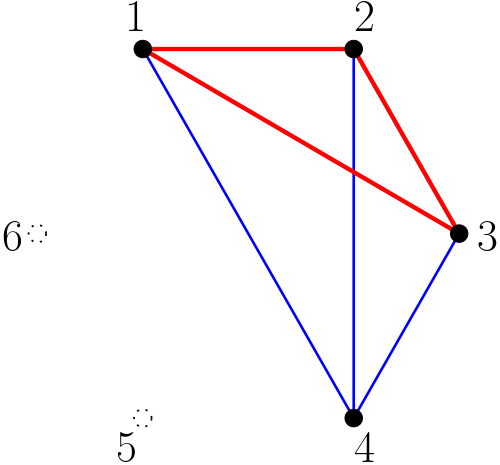}
\caption{$G^{(1)}$: Subgraph of client $1$}
\label{fig:a1}
\end{subfigure}
\hspace{7mm}
\begin{subfigure}[b]{.25\linewidth}
\includegraphics[width=\linewidth]{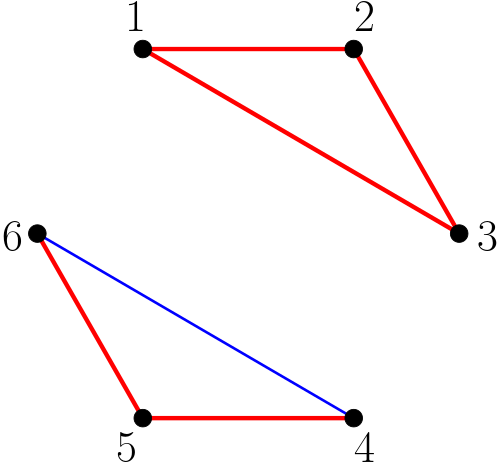}
\caption{$G^{(2)}$: Subgraph of client $2$}
\label{fig:a2}
\end{subfigure}
\hspace{7mm}
\begin{subfigure}[b]{.25\linewidth}
\includegraphics[width=\linewidth]{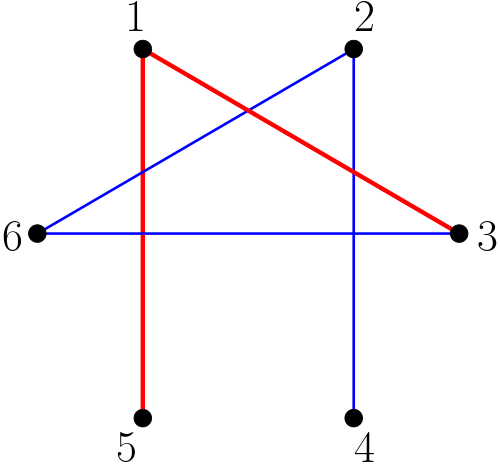}
\caption{$G^{(3)}$: Subgraph of client $3$}
\label{fig:a3}
\end{subfigure}

\par\bigskip

\begin{subfigure}[b]{.25\linewidth}
\includegraphics[width=\linewidth]{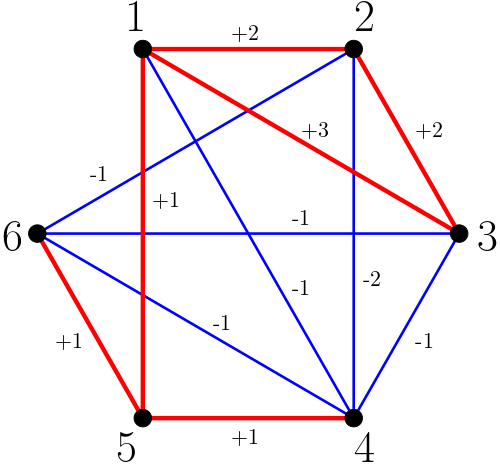}
\caption{$G$: Consensus signed weighted graph}
\label{fig:w}
\end{subfigure}

\caption{Examples of client fields of view ($\Omega^{(1)}, \Omega^{(2)}, \Omega^{(3)}$), client-observed subgraphs ($G^{(1)}, G^{(2)}, G^{(3)}$), and server consensus signed weighted graph.
In FOVs, each viewable edge is represented by a dashed line. 
In subgraphs, each edge is colored red, and each non-edge is colored blue.
The consensus graph is the weighted summation of all subgraphs, such that each edge is counted as $+1$, and each non-edge is counted as $-1$.
In the particular examples above, it is assumed that clients do not censor evidence graphs, i.e., $\tilde{G}^\kth = G^\kth$, for simplicity of visualization. In the rest of the paper, we assume that clients censor evidence graphs.} 
\label{figs:client_server}
\end{figure*}


In this section, we present an overview of federated myopic learning. 
We formally define the community detection task under this paradigm, and present an central server algorithm that solves the problem by using the censored subgraph information reported by the clients. 
We also introduce the notations that will be used later in the paper.

Without specification we use lowercase letters (e.g., $a,b,u,v$) for scalars and vectors, and uppercase letters (e.g., $A, B, C$) for matrices and sets. 
For any natural number $n$, we use $[n]$ to denote the set $\{1, \dots, n\}$.
For clarity when dealing with a sequence of objects, we use the superscript ${(i)}$ to denote the $i$-th object in the sequence, and subscript $j$ to denote the $j$-th entry. For example, for a sequence of vectors $\{x^{(i)}\}_{i \in [n]}$, $x^{(1)}_2$ represents the second entry of vector $x^{(1)}$.
We use $\R$ to denote the set of real numbers.
We use $\onevct$ to denote the all-one vector.
For any matrix $A$, we use $\lambda_m(A)$ to denote its $m$th-smallest eigenvalue. 
For any vector $u$, we use $\diag{u}$ to denote the diagonal matrix with $u$ in the diagonal.
For every graph mentioned in this paper, without further discussion we always assume there exists no self loops.


\subsection{Federated Myopic Community Detection Paradigm}
A federated myopic learning model $\mathcal{M}(n, K, \{\Omega^\kth, p^\kth, q^\kth, r^\kth\}_{k=1}^K \mid y^\ast)$ consists of $n$ nodes and $K$ clients, and every client is equipped with a tuple $\{\Omega^\kth, p^\kth, q^\kth, r^\kth\}$. In this model $y^\ast \in \{+1, -1\}^n$ is the node label vector of the network, indicating the underlying community structure. We use $V = [n]$ to denote the set of nodes.

For every single client $k \in [K]$, $\Omega^\kth \subset \abs{V} \times \abs{V}$ is the \emph{field of view (FOV)} of client $k$, $p^\kth, q^\kth$ are the local \emph{signal and noise level} parameters, and $r^\kth$ is the local $\emph{censorship}$ parameter.
We say client $k$ is \emph{myopic}, if its field of view is not equal to the complete graph spanned by $V$, i.e., $\Omega^\kth \neq \abs{V} \times \abs{V}$. 

Now, nature generates a local subgraph $G^\kth = (V^\kth, E^\kth)$ for every client $k$ using the following rule: for every viewable pair $(i,j) \in \Omega^\kth$ with $i\neq j$, connect $(i,j)$ with probability $p^\kth$ if the labels are equal, i.e., $y_i^\ast = y_j^\ast$; otherwise connect $(i,j)$ with probability $q^\kth$. After that, remove all isolated nodes from $V^\kth$, and denote $n^\kth := \abs{V^\kth}$. We call $G^\kth = (V^\kth, E^\kth)$ the local subgraph observed by client $k$.

Next, client $k$ constructs the corresponding \emph{censored evidence graph} $\tilde{G}^\kth = (V^\kth, \tilde{E}^\kth)$ from the observed subgraph $G^\kth$ as follows:
the edge set $\tilde{E}^\kth$ starts empty. For every viewable pair $(i,j) \in \Omega^\kth$ with $i\neq j$, if $(i,j)$ is an edge in $E^\kth$, add $(i,j)$ to $\tilde{E}^\kth$ with probability $1-r^\kth$; if $(i,j)$ is not an edge in $E^\kth$, add it to $\tilde{E}^\kth$ with probability $r^\kth$.
Each client sends its censored evidence graph to a central server. 

We now summarize the federated myopic learning task in this paper.
It is worth mentioning, that one can only recover the node label vector $y^\ast$ up to permutation of the groups without prior knowledge. We define the community structure matrix $Y^\ast := y^\ast y^{\ast\top} \in \{+1, -1\}^{n\times n}$. For any pair of nodes $i$ and $j$, if they are in the same community then $Y_{ij}^\ast = 1$; if they are not in the same community then $Y_{ij}^\ast = -1$.  Note that the recovery of $Y^\ast$ is equivalent to the recovery of $y^\ast$, up to permutation of the groups. 
\begin{definition}[Federated Myopic Community Detection]
\textbf{Unknown}: Community structure matrix $Y^\ast = y^\ast y^{\ast\top}$ indicating the underlying network community structure.

\textbf{Observation}: Censored evidence graphs $\{\tilde{G}^\kth\}^K_{k=1}$ sent by the clients, which are generated from the local subgraphs $\{G^\kth\}^K_{k=1}$ observed by the clients; fields of view $\{\Omega^\kth\}^K_{k=1}$ of the clients.

\textbf{Problem}: Recover the hidden community structure matrix $Y^\ast$ from the censored evidence graphs $\{\tilde{G}^\kth\}^K_{k=1}$.
\label{def:problem_detection}
\end{definition}


\subsection{Central Server Algorithm}
In this section, we present a central server algorithm, which recovers the hidden community structure of a network by computing a \emph{consensus graph} and solving a semidefinite program (SDP).

\begin{algorithm}
\caption{Central Server Community Detection}
\label{alg:central} 
\textbf{Input:} Evidence graphs $\{\tilde{G}^\kth\}^K_{k=1}$, client fields of view $\{\Omega^\kth\}^K_{k=1}$ \\
\textbf{Output:} Estimated community structure matrix $\hat{Y}$  
\begin{multicols}{2} 
\begin{algorithmic}[1] 
    \STATE Initialize G as a weighted complete graph spanned by $V$
    \STATE Set all edge weights in $G$ to $0$
    \FOR{$k \in [K]$}
        \FOR{$(i,j) \in \Omega^\kth$}
            \IF{$(i,j) \in \tilde{G}^\kth$}
                \STATE $w(i,j) \leftarrow w(i,j) + 1$
            \ELSE
                \STATE $w(i,j) \leftarrow w(i,j) - 1$
            \ENDIF
        \ENDFOR
    \ENDFOR
    
    \STATE $W \leftarrow$ weighted adjacency matrix of $G$
    \STATE Solve the following semidefinite program
        \begin{align}
        \hat{Y} = \qquad \argmax_Y \qquad &\inprod[W]{Y} \nonumber\\
        \st \qquad & Y_{ii} = 1\,, Y \succeq 0 \,. 
        \label{alg:opt}
        \end{align}
\end{algorithmic}
\end{multicols}

\end{algorithm}

Algorithm \ref{alg:central} computes a consensus graph $G = (V, E, w)$, where $w: E \to \R$ is a weight function for edges. 
For every client $k$ and every pair $(i,j)$ in the field of view of client $k$, if it is an edge in the evidence graph, the server increments the corresponding weight $w(i,j)$ in the consensus graph by $1$; otherwise, the server decrements the weight by $1$.
We use matrix $W \in \R^{n\times n}$ to denote the weighted adjacency matrix of the consensus graph, where $W_{ij} := w(i,j)$.
The server then solves program \eqref{alg:opt} for $\hat{Y}$, the estimated community structure matrix.

\begin{claim}
Efficient exact recovery of the community structure is possible. 
Under certain topological and statistical conditions, Algorithm \ref{alg:central} recovers the true community structure matrix $Y^\ast$ perfectly and efficiently.
\label{claim:main}
\end{claim}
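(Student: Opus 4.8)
The plan is to show that the ground truth $Y^\ast = \yyast$ is the \emph{unique} maximizer of the semidefinite program \eqref{alg:opt} with high probability, under a pair of hypotheses that separately quantify (i) a statistical signal-to-noise gap coming from $\{p^\kth, q^\kth, r^\kth\}$ and (ii) a topological expansion condition on the union of the fields of view. Because \eqref{alg:opt} is convex, the ``efficiently'' part is automatic: an interior-point solver returns $\hat Y$ in polynomial time, so the whole content of the claim is \emph{exact recovery}, i.e. $\hat Y = Y^\ast$.

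First I would compute the expected consensus matrix $\Expect{W}$. For a pair $(i,j)$ let $K_{ij} = \{k : (i,j)\in\Omega^\kth\}$ be the clients that view it. Tracing the generative step followed by the censorship step, a viewable same-community pair becomes an evidence edge with effective probability $\tilde p^\kth = p^\kth(1-r^\kth) + (1-p^\kth)r^\kth$ and therefore contributes $2\tilde p^\kth-1$ in expectation to $W_{ij}$; a cross-community pair contributes $2\tilde q^\kth-1$, where $\tilde q^\kth = q^\kth(1-r^\kth)+(1-q^\kth)r^\kth$. Thus $\Expect{W}$ is itself the weighted adjacency matrix of a \emph{signed weighted} graph whose entries are $\sum_{k\in K_{ij}}(2\tilde p^\kth-1)$ or $\sum_{k\in K_{ij}}(2\tilde q^\kth-1)$ according to $Y^\ast_{ij}$. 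This is exactly the object to which the paper's Cheeger-type inequality applies, with the per-client gaps $\tilde p^\kth-\tilde q^\kth$ playing the role of signal.

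Next I would invoke the KKT conditions of \eqref{alg:opt}. A dual certificate for $Y^\ast$ consists of a diagonal matrix $\Diag{d}$ and a slack $S = \Diag{d} - W$ with $S\succeq 0$ and $S Y^\ast = 0$; complementary slackness forces $d_i = \sum_j W_{ij} y_i^\ast y_j^\ast$, which makes $S y^\ast = 0$ by construction. SDP duality then yields that $Y^\ast$ is optimal once $S\succeq 0$, and is the \emph{unique} optimum once in addition $\ker S = \operatorname{span}(y^\ast)$. Exact recovery therefore reduces to the single spectral statement
\begin{equation}
\min_{v \perp y^\ast,\ \norm{v}=1} v^\top S v \;=\; \lambda_2(S) \;>\; 0 \qquad\text{with high probability,}
\label{eq:crux}
\end{equation}
which I would attack by splitting $S = \Expect{S} + (S-\Expect{S})$. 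Restricted to $\{y^\ast\}^\perp$, the quadratic form $v^\top\Expect{S}v$ is the Laplacian form of the signed weighted graph $\Expect{W}$, whose spectral gap the Cheeger-type inequality lower-bounds by the Cheeger-type constant of the consensus topology times the signal gap. This is precisely the mechanism by which disconnected or bottlenecked unions of FOVs (Figures~\ref{fig:link}--\ref{fig:star}) kill the constant and make recovery impossible, while good expansion (Figure~\ref{fig:star2}) yields a strictly positive gap. For the fluctuation term, the entries $W_{ij}-\Expect{W_{ij}}$ are independent bounded centered sums over clients, so matrix Bernstein bounds $\norm{W-\Expect{W}}$, and a scalar Bernstein/Chernoff bound controls each diagonal deviation $d_i - \Expect{d_i}$.

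Combining the two pieces, \eqref{eq:crux} holds whenever the expansion-weighted signal gap strictly dominates the concentration error, and this inequality is exactly the ``topological plus statistical'' condition asserted in the claim; its failure on the degenerate topologies matches the information-theoretic impossibility discussed for a single client. The main obstacle I anticipate is the \emph{heterogeneity} of the model: since each pair is seen by a different subset $K_{ij}$ of clients with different parameters, $\Expect{W}$ is \emph{not} a clean two-block matrix as in the vanilla SBM, so neither $\lambda_2(\Expect{S})$ nor the fluctuation bound follows by symmetry. Controlling $\lambda_2(\Expect{S})$ on $\{y^\ast\}^\perp$ for such an irregular signed weighted graph---and, upstream of that, proving a Cheeger-type inequality general enough to admit negative edge weights so that this lower bound exists at all---is the technical heart of the argument.
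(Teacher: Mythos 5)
Your proposal follows essentially the same route as the paper: the same dual certificate ($\Lambda^\ast_{ii} = \sum_{j\neq i} W_{ij} y_i^\ast y_j^\ast$, slack $B^\ast = \Lambda^\ast - W$), the same reduction of exact recovery to $\lsec(\Lambda^\ast - W) > 0$, and the same expectation-plus-fluctuation split handled by the signed weighted Cheeger-type inequality on the expectation and matrix Bernstein on the noise. The only cosmetic differences are that the paper applies matrix Bernstein to the diagonal deviations as well (rather than scalar Bernstein plus a union bound), and that the Cheeger bound is invoked for the signed consensus graph with adjacency $\Expect{W} \circ Y^\ast$ (equivalently, $\Lambda^\ast - W$ conjugated by $\diag{y^\ast}$) rather than for $\Expect{W}$ itself, a distinction your write-up blurs slightly but clearly intends.
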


Figure \ref{figs:client_server} illustrates an example federated myopic learning model with $n=6$ nodes and $K=3$ clients. We show fields of view of the clients, the local subgraphs, and the consensus weighted graph from top to bottom.

\subsection{Discussion}
Here we list and discuss the assumptions that will be used in our analysis. 
For simplicity of analysis, it is assumed that the groups are balanced, i.e., $\onevct^\top y^\ast = 0$. If the groups are unbalanced, one can solve the semidefinite program in Algorithm \ref{alg:central} by adding an extra constraint. For example, suppose $\onevct^\top y^\ast = n_0$, where $n_0 \neq 0$; then one can solve the SDP with the help of an extra constraint $\inprod[Y]{\onemtx} = n_0^2$.

We also assume that for each client, the signal and noise level parameters fulfill $0 < q^\kth < p^\kth < 1$. The motivation is that nodes from the same group are more likely to be connected than those from different groups.
Similarly, we assume that the censorship parameter $r^\kth$ is in the range $(0, 0.5)$. If $r^\kth = 0$, there is no censorship at all, and if $r^\kth = 0.5$, the censored evidence graph is pure noise and provides no information.

Finally, our analysis focuses on the scenario, where every client is myopic.
Our model reduces to a weighted version of the stochastic block model (SBM), if clients are not myopic and observe the whole network. Recovery in the SBM has been studied extensively in prior literature \citep{abbe2017community}. In this paper we are interested in the signal and noise level parameters, \emph{as well as} the network FOV topology of the clients (i.e., $\{\Omega^\kth\}^K_{k=1}$). Thus we focus on the myopic regime.

\section{Novel Signed Weighted Cheeger-type Inequality}
In this section, we provide a novel Cheeger-type inequality, which relates the spectral gap of a signed weighted graph Laplacian to the \emph{signed weighted edge expansion} of the graph. 
Readers should be aware that the results in this section is general, and not limited to the particular consensus graph in our model. 

Assume $G = (V,E,w)$ with $V = [n]$ is a general weighted graph, and $w: E\to \R$ is a general weight function. We use $W$ to denote the corresponding weighted adjacency matrix. For any edge $(i,j) \in E$, we have $W_{ij} := w(i,j)$; otherwise $W_{ij} := 0$. We define the signed weight matrix $W^+, W^-$ as follows: for every entry $(i,j)$, $W_{ij}^+ = \max(W_{ij}, 0)$, and $W_{ij}^- = \min(W_{ij}, 0)$. 
We now introduce the essential graph definitions.
\begin{definition}[Boundary of a Set]
For any set $S \subset V$, denote its boundary as
\[
\partial S = \{(i,j) \mid i \in S, j \notin S\} \,.
\]
\end{definition}

\begin{definition}[Boundary Weight]
For any set $S \subset V$, let $w^+(\partial S)$ and $w^-(\partial S)$ denote its signed boundary weights, and let $w(\partial S)$ denote its boundary weight, formally defined as
\[
w^+(\partial S) = \sum_{i \in S, j \notin S} W_{ij}^+ \,, \qquad
w^-(\partial S) = \sum_{i \in S, j \notin S} W_{ij}^- \,, \qquad
w(\partial S) =\sum_{i \in S, j \notin S} W_{ij} \,.
\]
\end{definition}

\begin{definition}[Node Degree and Set Degree]
For any node $i \in V$, let $d^+(i)$ and $d^-(i)$ denote its signed node degree, and let $d(i)$ denote its node degree, formally defined as
\[
d^+(i) = \sum_{j \neq i} W_{ij}^+ \,, \qquad
d^-(i) = \sum_{j \neq i} W_{ij}^- \,, \qquad
d(i) =\sum_{j \neq i} W_{ij} \,.
\]
Similarly for any set $S \subset V$, let $d^+(S)$ and $d^-(S)$ denote its signed set degree, and let $d(S)$ denote its set degree, formally defined as
\[
d^+(S) = \sum_{i \in S} d^+(i) \,, \qquad
d^-(S) = \sum_{i \in S} d^-(i) \,, \qquad
d(S) = \sum_{i \in S} d(i) \,.
\]
\label{def:node_and_set_degree}
\end{definition}

We use the shorthand notation $d^+_{\min} := \min_{i\in V} d^+(i)$ to denote the minimum positive node degree. We can similarly define the maximum degree $d_{\max} := \max_{i\in V} d(i)$, and the minimum degree $d_{\min} := \min_{i\in V} d(i)$. 
We now define the signed weighted edge expansion, an important Cheeger-type constant that will be used in our analysis.

\begin{definition}[Signed Weighted Edge Expansion]
Given a graph $G = (V,E,w)$, for any non-empty set $S \subset V$, let $\phi^+_S$ and $\phi^-_S$ denote the signed weighted edge expansion of $S$, and let $\phi_G$ denote the signed weighted edge expansion of graph $G$, formally defined as
\[
\phi^+_S = \frac{w^+(\partial S)}{d^+(S)} \,, \qquad
\phi^-_S = w^-(\partial S) \,, \qquad
\phi_G = 
\frac{1}{2}d^+_{\min} \min_{S \subset V, d^+(S) \leq d^+(V)/2}  (\phi^+_S)^2
+ 2 \min_{S \subset V} \phi^-_S \,.
\]
\label{def:cheeger_constant}
\end{definition}

We use $L := D- W$ to denote the graph Laplacian $G$, where $D := \diag{d(1),\dots,d(n)}$ is the degree matrix. 
Note that $\onevct$ is always an eigenvector of any graph Laplacian $L$, with the corresponding eigenvalue being $0$.
We use $\lperp(L)$ to denote the smallest eigenvalue of $L$, with the corresponding eigenvector being orthogonal to $\onevct$. As a side note, for unweighted graphs and positively weighted graphs, $\lperp(L)$ is exactly equivalent to $\lambda_2(L)$, the second smallest eigenvalue of the graph Laplacian. This quantity is also called the algebraic connectivity of graph $G$.
We now present our Cheeger-type inequality for signed weighted graphs.
\begin{theorem}[Cheeger-type Inequality for Signed Weighted Graphs]
For any general signed weighted graph $G = (V,E,w)$ with graph Laplacian $L$, we have
\[
\lperp(L) \geq \phi_G \,.
\]
\label{thm:cheeger}
\end{theorem}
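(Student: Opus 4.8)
The plan is to split the signed Laplacian into its positive and negative parts, bound the two resulting quadratic forms separately, and recombine through the variational characterization of $\lperp(L)$. The starting point is the identity $x^\top L x = \sum_{i<j} W_{ij}(x_i-x_j)^2$, valid verbatim for signed weights since $L=D-W$ with $D_{ii}=\sum_{j\neq i}W_{ij}$. Writing $W = W^+ + W^-$ and $L^\pm = D^\pm - W^\pm$ with $D^\pm = \diag{d^\pm(1),\dots,d^\pm(n)}$, we get $L = L^+ + L^-$ and $x^\top L x = \sum_{i<j} W^+_{ij}(x_i-x_j)^2 + \sum_{i<j} W^-_{ij}(x_i-x_j)^2$. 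Because $L^+$ is a genuine nonnegatively weighted Laplacian, it is positive semidefinite with bottom eigenvector $\onevct$ and eigenvalue $0$, so $x^\top L^+ x \ge \lambda_2(L^+)\normsq{x}$ for every $x\perp\onevct$.

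I would then reduce the theorem to two standalone claims: (i) $x^\top L^- x \ge 2\bigl(\min_S\phi^-_S\bigr)\normsq{x}$ for all $x$, and (ii) $\lambda_2(L^+)\ge \tfrac12 d^+_{\min}\bigl(\min_{S:\,d^+(S)\le d^+(V)/2}\phi^+_S\bigr)^2$. Granting both, for any $x\perp\onevct$ with $\normsq{x}=1$ the variational bound together with (i) and (ii) gives $x^\top L x = x^\top L^+ x + x^\top L^- x \ge \tfrac12 d^+_{\min}\bigl(\min_S\phi^+_S\bigr)^2 + 2\min_S\phi^-_S = \phi_G$; minimizing over such $x$ yields $\lperp(L)\ge\phi_G$.

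The negative part (i) is elementary. Since $W^-_{ij}\le 0$, multiplying the elementary inequality $(x_i-x_j)^2\le 2(x_i^2+x_j^2)$ by $W^-_{ij}$ reverses it to $W^-_{ij}(x_i-x_j)^2\ge 2W^-_{ij}(x_i^2+x_j^2)$; summing and collapsing the double sum gives $x^\top L^- x \ge 2\sum_i d^-(i)x_i^2 \ge 2\bigl(\min_i d^-(i)\bigr)\normsq{x}$. Finally, since singletons form a subfamily of all cuts, $\min_i d^-(i)=\min_i w^-(\partial\{i\})\ge \min_S w^-(\partial S)=\min_S\phi^-_S$, so the stated bound holds (indeed with room to spare).

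The main obstacle is (ii), which is exactly the ``hard direction'' of Cheeger's inequality for the nonnegatively weighted graph $G^+$, and I expect essentially all the work to sit here. My plan is the classical truncation-plus-coarea argument with $d^+(\cdot)$ playing the role of volume. Take the eigenvector $x$ of $L^+$ for $\lambda_2(L^+)$; shifting $x\mapsto x-c\onevct$ at a $d^+$-weighted median threshold $c$ leaves the numerator $\sum W^+_{ij}(x_i-x_j)^2$ unchanged while only increasing $\sum_i x_i^2$ (because $x\perp\onevct$ forces $\sum_i(x_i-c)^2=\normsq{x}+nc^2$), so the Rayleigh quotient does not increase and both $\{x_i>c\}$ and $\{x_i<c\}$ acquire $d^+$-volume at most $d^+(V)/2$. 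Splitting the shifted vector into its positive and negative parts $g,g'$, the pointwise bound $(x_i-x_j)^2\ge(g_i-g_j)^2+(g'_i-g'_j)^2$ together with a mediant inequality passes the Rayleigh bound to one of them, say $g\ge 0$ supported on a set of volume $\le d^+(V)/2$. Writing $h:=\min_{S:\,d^+(S)\le d^+(V)/2}\phi^+_S$, I would then combine the co-area identity $\sum_{i<j}W^+_{ij}\abs{g_i^2-g_j^2}=\int_0^\infty w^+(\partial\{i:g_i^2>t\})\,dt\ge h\sum_i d^+(i)g_i^2$ with the Cauchy--Schwarz factorization $\sum W^+_{ij}\abs{g_i^2-g_j^2}\le\bigl(\sum W^+_{ij}(g_i-g_j)^2\bigr)^{1/2}\bigl(\sum W^+_{ij}(g_i+g_j)^2\bigr)^{1/2}$ and the crude bound $\sum W^+_{ij}(g_i+g_j)^2\le 2\sum_i d^+(i)g_i^2$; squaring and rearranging gives $\sum W^+_{ij}(g_i-g_j)^2\ge\tfrac{h^2}{2}\sum_i d^+(i)g_i^2\ge\tfrac{h^2}{2}d^+_{\min}\sum_i g_i^2$, i.e. $\lambda_2(L^+)\ge\tfrac12 d^+_{\min}h^2$. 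The delicate points will be the existence of the volume-splitting threshold, the layer-cake justification of the co-area identity, and the final conversion from the degree-weighted norm $\sum_i d^+(i)g_i^2$ to the plain norm $\sum_i g_i^2$, which is precisely what produces the $d^+_{\min}$ factor appearing in $\phi_G$.
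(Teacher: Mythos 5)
Your proposal is correct, and for the positive part it is essentially the same argument as the paper's: both reduce to a degree-median shift of an eigenvector followed by threshold cuts and a Cauchy--Schwarz step. The paper phrases the cut selection probabilistically --- a random level $t$ with density $2\abs{t}$ applied to the shifted, renormalized eigenvector $u$ --- which is exactly your deterministic layer-cake integration over the level sets $\{i : g_i^2 > t\}$, so there is no substantive difference there. Where you genuinely diverge is the negative part and the overall assembly. The paper handles the negative term with a second randomized threshold on a rescaled copy $u'$ of the same eigenvector of $L$, producing a cut $S_{t'}$ with $R_L^-(u') \geq 2 w^-(\partial S_{t'})$; you get it from the one-line bound $\sum_{i<j} W^-_{ij}(x_i - x_j)^2 \geq 2 \sum_i d^-(i) x_i^2 \geq 2 \bigl(\min_i d^-(i)\bigr) \normsq{x}$, valid for every $x$, followed by the observation that singleton cuts form a subfamily of all cuts. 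This is more elementary, needs no randomization, and in fact proves something slightly stronger: the term $2\min_{S} \phi^-_S$ in $\phi_G$ could be replaced by the (at least as large) quantity $2\min_i d^-(i)$. Structurally, the paper works with the eigenvector $v$ of the signed Laplacian $L$ throughout, bounding $R_L^+(v)$ and $R_L^-(v)$ via two differently normalized shifts, whereas you prove two standalone operator bounds --- the classical Cheeger inequality for the genuine nonnegatively weighted Laplacian $L^+$, and a uniform lower bound for $L^-$ --- and combine them for an arbitrary $x \perp \onevct$ through the variational characterization; this modularity makes transparent that the theorem's novelty sits entirely in the negative correction term. The delicate points you flag yourself (existence of the weighted median; the degenerate case where $g$ or $g'$ vanishes, in which the mediant step is vacuous but harmless; level sets with $d^+(S_t) = 0$, which contribute nothing to the coarea integral) are all handled by standard Cheeger bookkeeping, so I see no gap.
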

\section{Federated Myopic Exact Recovery}
In this section, we investigate the conditions for efficient exact recovery of the community structure in a federated myopic learning model.
We say an algorithm achieves \emph{exact recovery}, if the estimated community structure matrix $\hat{Y}$ is identical to the true matrix $Y^\ast$.
Our analysis provides provable guarantee of efficient exact recovery through Algorithm \ref{alg:central}. 

Before we proceed, we first introduce the definition of the \emph{signed consensus graph}. 
\begin{definition}[Signed Consensus Graph]
For a consensus graph $G$ with the weighted adjacency matrix $W$ as in Algorithm \ref{alg:central}, and the community structure matrix $Y^\ast$ as in Definition \ref{def:problem_detection}, we use $\bar{G}$ to denote the graph generated from the adjacency matrix $\Expect{W} \circ Y^\ast$, where $\circ$ denotes the Hadamard product. We call $\bar{G}$ the signed consensus graph of $G$.
\label{def:bar_G}
\end{definition}
Naturally, $\phi_{\bar{G}}$ is defined as the edge expansion of the signed consensus graph $\bar{G}$ as in Definition \ref{def:cheeger_constant}. 
For the readers' convenience, here we restate the optimization problem in Algorithm \ref{alg:central}:
\begin{align}
\maximize_Y \qquad &\inprod[W]{Y} \nonumber\\
\st \qquad & Y_{ii} = 1\,, Y \succeq 0 \,.
\label{opt:primal}
\end{align}
It is clear that in Algorithm \ref{alg:central}, the computation of the weighted adjacency matrix $W$ and the signed consensus graph $\bar{G}$ can be done in polynomial time in terms of $n$. 
Also note that Problem \eqref{opt:primal} is a semidefinite program. It is known that semidefinite programs are convex and can be solved efficiently in polynomial time using existing solvers \citep{boyd2004convex}. 
Thus the whole server algorithm can be run in polynomial time efficiently.

It remains to prove the correctness of Algorithm \ref{alg:central}. In other words, we want to know under what conditions, Problem \eqref{opt:primal} returns the correct community structure matrix $Y^\ast$ from input $W$, the weighted adjacency matrix of the consensus graph.
For simplicity of presentation, we define a signal coefficient $s^\kth_{ij}$ for every node pair $i, j \in V$ and every client $k\in [K]$ as follows. If pair $(i,j) \notin \Omega^\kth$, we define $s^\kth_{ij} := 0$. Otherwise, we define
$s^\kth_{ij} := p^\kth + r^\kth - 2p^\kth r^\kth$ if $y_i^\ast = y_j^\ast$, and $s^\kth_{ij} := q^\kth + r^\kth - 2q^\kth r^\kth$ if $y_i^\ast \neq y_j^\ast$.
We now present our main theorem.
\begin{theorem}[Exact Recovery of Community Structure]
For a federated myopic learning model $\mathcal{M}(n, K, \{\Omega^\kth, p^\kth, q^\kth, r^\kth\}_{k=1}^K \mid y^\ast)$ with the signed consensus graph $\bar{G}$, 
if $\phi_{\bar{G}} > 0$ and
\[
\max_{i\in V} \abs{\sum_{j\in V,k\in [K]} s^\kth_{ij} (1-s^\kth_{ij})} 
=
O\left(\frac{\phi_{\bar{G}}^2 - \phi_{\bar{G}}\log n}{\log n}\right) \,,
\]
then Problem \eqref{opt:primal} achieves exact recovery of the community structure, i.e., $\hat{Y} = Y^\ast = y^\ast y^{\ast\top}$, in polynomial time with probability tending to $1$. 
\label{thm:exact_recovery}
\end{theorem}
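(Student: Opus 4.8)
The plan is to certify that $Y^\ast = \yyast$ is the unique optimizer of the semidefinite program \eqref{opt:primal} by constructing an explicit dual certificate, and then to verify the certificate through a spectral concentration argument powered by Theorem \ref{thm:cheeger}. Since \eqref{opt:primal} is convex with a strictly feasible point, the KKT conditions are necessary and sufficient for optimality. Introducing a diagonal multiplier $\Diag{\mu}$ for the constraints $Y_{ii} = 1$ and a positive semidefinite multiplier for $Y \succeq 0$, the dual-feasible certificate takes the form $S := \Diag{\mu} - W$, and $Y^\ast$ is optimal provided $S \succeq 0$ together with complementary slackness $\inprod[S]{Y^\ast} = 0$. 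Because $Y^\ast = \yyast$ has column space $\mathrm{span}(y^\ast)$, slackness reduces to $S y^\ast = 0$, which pins down the multiplier uniquely as $\mu_i = \sum_j W_{ij}\, y_i^\ast y_j^\ast$, i.e.\ the signed degree of node $i$ in the graph with adjacency matrix $W \circ Y^\ast$. To upgrade optimality to \emph{uniqueness}, I would establish the rank condition $\lambda_2(S) > 0$: then any optimal $\hat Y \succeq 0$ must have column space inside $\mathrm{span}(y^\ast)$, and $\hat Y_{ii} = 1$ forces $\hat Y = Y^\ast$.

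The key structural observation is that conjugating by the sign pattern linearizes the certificate. Define $\tilde S := \Diag{y^\ast}\, S\, \Diag{y^\ast}$; since $y^\ast \in \{\pm 1\}^n$, $\Diag{y^\ast}$ is orthogonal, so $\tilde S$ and $S$ share the same spectrum. A direct computation using $(y_i^\ast)^2 = 1$ gives $\tilde S = \Diag{(W\circ Y^\ast)\onevct} - (W \circ Y^\ast)$, which is exactly the graph Laplacian of the signed weighted graph with adjacency matrix $W \circ Y^\ast$. Hence $\tilde S \onevct = 0$ for every realization, and correspondingly $S y^\ast = 0$ since $y^\ast = \Diag{y^\ast}\onevct$. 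Consequently, proving $S \succeq 0$ with null space exactly $\mathrm{span}(y^\ast)$ is equivalent to proving $\lperp(\tilde S) > 0$.

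To control $\lperp(\tilde S)$ I would split $\tilde S$ into mean and fluctuation. By Definition \ref{def:bar_G}, $\Expect{\tilde S} = \Diag{(\Expect{W}\circ Y^\ast)\onevct} - (\Expect{W}\circ Y^\ast)$ is precisely the Laplacian of the signed consensus graph $\bar G$, so Theorem \ref{thm:cheeger} yields $\lperp(\Expect{\tilde S}) \ge \phi_{\bar G} > 0$. The fluctuation $\tilde S - \Expect{\tilde S} = L_M$ is the Laplacian of $M := (W - \Expect{W})\circ Y^\ast$, a symmetric matrix with independent, mean-zero, bounded entries above the diagonal. Here I would use the identification $s^\kth_{ij} = \Prob{(i,j)\in\tilde E^\kth}$: each client contributes $W^\kth_{ij} = 2\,\ber{s^\kth_{ij}} - 1$ on its field of view, so the entrywise variance is $\sum_k 4\, s^\kth_{ij}(1 - s^\kth_{ij})$ and the maximum row variance $v_{\max}$ equals, up to the factor $4$, the hypothesis quantity $\max_i \abs{\sum_{j,k} s^\kth_{ij}(1 - s^\kth_{ij})}$. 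A scalar Bernstein bound with a union bound over rows controls $\norm{\Diag{M\onevct}}$, and a matrix Bernstein bound controls $\norm{M}$; both are of order $\sqrt{v_{\max}\log n}$ up to a lower-order $\log n$ term. Since $L_M$ and $\Expect{\tilde S}$ both annihilate $\onevct$, Weyl's inequality on the invariant subspace $\onevct^\perp$ gives $\lperp(\tilde S) \ge \phi_{\bar G} - \norm{L_M}$.

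Finally, the hypothesis is exactly the condition making $\norm{L_M} < \phi_{\bar G}$ with probability tending to one. Setting the deviation threshold to $t = \phi_{\bar G}$ in the Bernstein tail $\Prob{\norm{L_M} \ge t} \le 2n\exp\!\big(-\tfrac{t^2/2}{v_{\max} + b\,t/3}\big)$ and requiring the exponent to exceed $\log n$ forces, after isolating $v_{\max}$, a constraint of the form $v_{\max} = O\big((\phi_{\bar G}^2 - \phi_{\bar G}\log n)/\log n\big)$, where the $-\phi_{\bar G}\log n$ term arises from the linear boundedness contribution $b\,t$ in Bernstein's denominator. This delivers $\lperp(\tilde S) > 0$, hence $S \succeq 0$ with null space exactly $\mathrm{span}(y^\ast)$, certifying $\hat Y = Y^\ast$. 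I expect the main obstacle to be this spectral concentration of the random signed Laplacian $L_M$: obtaining an operator-norm bound whose dependence on $v_{\max}$ and on the entrywise boundedness parameter $b$ (determined by how many clients observe each pair) matches the precise numerator $\phi_{\bar G}^2 - \phi_{\bar G}\log n$, and confirming that the boundedness term does not swamp the spectral gap guaranteed by the Cheeger inequality.
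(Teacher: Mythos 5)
Your proposal follows essentially the same route as the paper's own proof: the identical dual certificate $\Lambda^\ast_{ii} = \sum_{j\neq i} W_{ij}\, y_i^\ast y_j^\ast$ with uniqueness certified by a strictly positive second eigenvalue, the same conjugation by $\Diag{y^\ast}$ identifying the certificate with the signed Laplacian whose expectation is the Laplacian of $\bar{G}$ and is lower-bounded via Theorem \ref{thm:cheeger}, and the same matrix-Bernstein control of the diagonal and off-diagonal fluctuations (with the same variance statistic $4\max_{i}\abs{\sum_{j,k} s^\kth_{ij}(1-s^\kth_{ij})}$) yielding the stated condition. The only differences are cosmetic --- you split into mean and fluctuation after conjugating rather than before, and use scalar Bernstein plus a union bound for the diagonal part where the paper applies matrix Bernstein to both parts --- so the argument is the paper's.
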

\section{Impossibility of Client Local Recovery}
In this section, we investigate the information-theoretic lower bounds for any algorithm to recover the local community structure from a single client's evidence graph $\tilde{G}^\kth = (V^\kth, \tilde{E}^\kth)$. 
One question may rise from Algorithm \ref{alg:central} is whether it is possible for the clients to run the semidefinite program (or any other algorithm) \emph{locally}, and obtain correct local community structure, \emph{without} sending the information to the server. This is often unwanted from a federated learning point of view, as the community structure information should be kept confidential.
As a result, we are interested in an \emph{impossibility} guarantee for any client to recover its local community structure without the assistance of a server.
Equivalently, this provides an impossibility guarantee for the server to recover the local community structure from a single client's input.

Consider client $k \in [K]$. Intuitively, for any node that is not in $V^\kth$, there is no way to recover its label. We define a subgraph community structure matrix $Y^{\kth\ast} \in \{+1, -1\}^{n^\kth \times n^\kth}$, as the matrix obtained by removing all rows and columns that are not in $V^\kth$.
As a side note, it is worth mentioning that recovery the true edge set $E^\kth$ from the censored edge set $\tilde{E}^\kth$ is very difficult, as the probability of recovering any single edge is $1-r$.

For simplicity, we introduce two client signal coefficients $s^\kth_+, s^\kth_-$, where
$s^\kth_+ := p^\kth + r^\kth - 2p^\kth r^\kth$, and $s^\kth_- := q^\kth + r^\kth - 2q^\kth r^\kth$.
We now present the impossibility theorem.
\begin{theorem}
For each client $k\in [K]$, any algorithm a learner could use to guess the local community structure $Y^{\kth\ast}$ will fail with probability at least $1/2$, if 
\[
\frac{(1-2r^\kth)^2 (p^\kth - q^\kth)^2}{\min(s^\kth_+ (1-s^\kth_+), s^\kth_- (1-s^\kth_+))} 
= 
O\left(\frac{n^\kth}{\abs{\Omega^\kth}}\right) \,.
\]
\label{thm:impossible}
\end{theorem}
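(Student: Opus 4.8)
The plan is to establish this impossibility result via Le Cam's two-point method, reducing the full recovery task to a single binary hypothesis test that even an optimal (likelihood-ratio) learner cannot win. First I would observe that, after censorship, the evidence graph $\tilde{G}^\kth$ restricted to the field of view $\Omega^\kth$ is an inhomogeneous Bernoulli random graph: each viewable pair $(i,j)\in\Omega^\kth$ is present independently with probability $s^\kth_+$ when $y_i^\ast=y_j^\ast$ and $s^\kth_-$ otherwise, so the law of $\tilde{G}^\kth$ factorizes into a product of Bernoulli measures indexed by $\Omega^\kth$. Consequently the entire information content of a client's report is captured by these two edge probabilities, and the recovery question becomes whether one can distinguish two labelings that induce nearly the same product measure.

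Next I would build the two competing hypotheses. Let $v^\ast$ be a vertex of minimum degree inside the FOV graph; since the average FOV degree is at most $2\abs{\Omega^\kth}/n^\kth$, we have $\deg_{\Omega^\kth}(v^\ast)\le 2\abs{\Omega^\kth}/n^\kth$. Take $y^{(0)}$ to be the true labeling and $y^{(1)}$ the labeling obtained by flipping the label of $v^\ast$ (flipping one additional node of the opposite community if one insists on preserving balance, which only changes constants). These yield distinct subgraph community structure matrices $Y^{\kth\ast}_0\neq Y^{\kth\ast}_1$, and the induced evidence-graph laws $P_0,P_1$ differ only on the edges incident to $v^\ast$ in $\Omega^\kth$, since every other pair retains its within/across status under both labelings.

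The core estimate is then a divergence bound. Because both measures factorize, $D_{\mathrm{KL}}(P_0\|P_1)$ equals the sum, over the $\deg_{\Omega^\kth}(v^\ast)$ edges touching $v^\ast$, of binary terms $D_{\mathrm{KL}}(\mathrm{Ber}(s^\kth_\pm)\|\mathrm{Ber}(s^\kth_\mp))$. Using the standard bound $D_{\mathrm{KL}}(\mathrm{Ber}(a)\|\mathrm{Ber}(b))\le (a-b)^2/(b(1-b))$ together with the identity $s^\kth_+-s^\kth_-=(1-2r^\kth)(p^\kth-q^\kth)$, each such term is at most $(1-2r^\kth)^2(p^\kth-q^\kth)^2/\min(s^\kth_+(1-s^\kth_+),\,s^\kth_-(1-s^\kth_+))$. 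Summing over the at most $2\abs{\Omega^\kth}/n^\kth$ incident edges and invoking the theorem's hypothesis makes the total divergence $O(1)$, and $o(1)$ once the implicit constant is taken small, so Pinsker's inequality gives $\mathrm{TV}(P_0,P_1)\to 0$. Le Cam's two-point inequality $\inf_{\hat Y}\max_{t\in\{0,1\}}\Prob[t]{\hat Y\neq Y^{\kth\ast}_t}\ge \tfrac12(1-\mathrm{TV}(P_0,P_1))$ then drives the Bayes error toward $1/2$, giving the asserted failure probability of at least $1/2$ up to vanishing terms; since misclassifying $v^\ast$ already defeats exact recovery of $Y^{\kth\ast}$, this lower bound transfers to the original problem.

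The main obstacle I anticipate is pinning down the precise variance-like denominator $\min(s^\kth_+(1-s^\kth_+),\,s^\kth_-(1-s^\kth_+))$: the two directions of binary KL arising from flipping $v^\ast$ produce the natural denominators $s^\kth_\pm(1-s^\kth_\pm)$, and one must verify, using $q^\kth<p^\kth$ and $r^\kth<1/2$ (hence $s^\kth_-<s^\kth_+$), that the stated expression uniformly dominates both. A secondary point to handle with care is the removal of isolated vertices, which makes the formal supports of $P_0$ and $P_1$ depend on the hypothesis; I would circumvent this by working with the law on the full pair set $\Omega^\kth$ prior to deletion, since deletion is a deterministic post-processing step that cannot increase total variation. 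The remainder is bookkeeping with constants.
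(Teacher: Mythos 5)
Your route is genuinely different from the paper's: the paper applies Fano's inequality over the full hypothesis class $\mathcal{Y}^\kth$ of all $2^{n^\kth}$ labelings, bounds the mutual information $I(Y^{\kth\ast},W^\kth)$ by a maximal pairwise KL divergence decomposed entrywise over all $\abs{\Omega^\kth}$ viewable pairs, and divides by $\log\abs{\mathcal{Y}^\kth}=n^\kth\log 2$; you instead use a two-point Le Cam argument localized at a minimum-FOV-degree vertex. The per-edge ingredients coincide (the identity $s^\kth_+-s^\kth_-=(1-2r^\kth)(p^\kth-q^\kth)$ and the bound $\mathbb{KL}(\mathrm{Ber}(a)\Vert\mathrm{Ber}(b))\le (a-b)^2/(b(1-b))$, and your observation that the stated denominator $s^\kth_-(1-s^\kth_+)$ is dominated by both natural ones is correct). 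But the two-point reduction cannot deliver the stated conclusion: Le Cam gives $\inf_{\hat Y}\max_t \mathbb{P}_t\{\hat Y\neq Y^{\kth\ast}_t\}\ge \tfrac12\left(1-\mathrm{TV}(P_0,P_1)\right)$, and since $P_0\neq P_1$ whenever $v^\ast$ has at least one viewable pair and $s^\kth_+\neq s^\kth_-$, this is \emph{strictly} below $1/2$ no matter how small you take the implicit constant in the $O(\cdot)$. You flag this yourself ("up to vanishing terms"), but the theorem asserts failure probability at least $1/2$, and the paper achieves that constant exactly because Fano's denominator $n^\kth\log 2$ grows with the hypothesis class; within your framework the repair is to flip each of $\Theta(n^\kth)$ low-degree vertices, keep the pairwise KLs $O(1)$, and apply Fano over those $M=\Theta(n^\kth)$ hypotheses (giving error $1-O(1/\log n^\kth)$), which is essentially a retreat to the paper's argument.

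The second, more structural gap is that your witness vertex interacts badly with the removal of isolated nodes. The minimum-FOV-degree vertex $v^\ast$ is exactly the vertex most likely to be isolated in $G^\kth$ and hence deleted from $V^\kth$: it is isolated with probability at least $(1-p^\kth)^{\deg_{\Omega^\kth}(v^\ast)}$ with $\deg_{\Omega^\kth}(v^\ast)\le 2\abs{\Omega^\kth}/n^\kth$, which is a constant bounded away from zero in the sparse regime $\abs{\Omega^\kth}=O(n^\kth)$ --- precisely the regime where the theorem's hypothesis is satisfiable with constant $p^\kth,q^\kth,r^\kth$. On the event $v^\ast\notin V^\kth$, the estimand $Y^{\kth\ast}$ (defined only on $V^\kth$) is \emph{identical} under both hypotheses, so an estimator can be correct under both simultaneously; the two-point bound then degrades to roughly $\tfrac12\left(\mathbb{P}\{v^\ast\in V^\kth\}-\mathrm{TV}(P_0,P_1)\right)$, a constant factor below $1/2$. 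Your proposed fix --- working with the pre-deletion law and invoking data processing --- controls the total variation between the observation laws, but not the coincidence of the targets, which is a property of the estimand rather than of the observation. (The paper also glosses over the randomness of $V^\kth$, but its Fano argument spreads the perturbation over all vertices rather than hinging on the single most fragile one.)
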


\section{Illustrative Examples}
\label{section:illustrative_examples}
In this section, we discuss the relationship between the proposed signed weighted edge expansion $\phi_G$, the regular Cheeger constant $h(G)$, and the Laplacian matrix $L$, for any general graph $G$.
Our discussion is broken down into three parts: 1) $G$ as an unweighted graph; 2) $G$ as a positively weighted graph; and 3) $G$ as a signed weighted graph.

It is worth highlighting that the regular Cheeger constant $h(G)$ is only defined for unweighted graphs. 
The Laplacian matrix of any positively weighted graph is always positive semidefinite, which can be proved by invoking Gershgorin circle theorem and the diagonal dominance property. Algebraically, diagonal dominance requires that 
$\abs{d(i)} = \abs{\sum_{j\neq i} W_{ij}} \geq \sum_{j\neq i} \abs{W_{ij}} $
for every node $i$.
The direction above, however, does not hold in the case of signed weighted graphs. If there exists some $W_{ij}$ that is less than $0$, The right-hand side will be strictly greater than the left-hand side, thus breaking diagonal dominance of the Laplacian. 
To this end, the proposed signed weighted edge expansion $\phi_G$ and Theorem \ref{thm:cheeger} provides a one-way guarantee for positive semidefiniteness of the Laplacian: if $\phi_G$ is nonnegative, then the related graph Laplacian is positive semidefinite.
 
\textbf{Unweighted Graphs.} 
Suppose $G=(V,E)$ is an unweighted graph.
Recall that the definition of the regular Cheeger constant $h(S)$ of a non-empty set $S \subset V$ and the graph $G$ is defined as
$h(S) = \frac{\abs{\partial S}}{\abs{S}}$, and
$h(G) = \min_{S\subset V, \abs{S} \leq n/2} h(S)$ .
The classical Cheeger inequality states that the spectral gap $\lsec(G)$ fulfills
\begin{equation*}
\lsec(G) \geq  \frac{h(G)^2}{2 d_{\max}} \
= \min_{S\subset V, \abs{S} \leq n/2} 
\left\{\frac{\abs{\partial S}^2}{2 d_{\max} \abs{S}^2}\right\}\,.
\end{equation*}
Without loss of generality, an unweighted graph can be reduced to a general weighted graph, by setting $W_{ij} = 1$ if there exists an edge between $i$ and $j$. As a result, our signed weighted edge expansion $\phi_G$ of a unweighted graph simplifies to 
\begin{equation*}
\lsec(G) \geq \phi_G = \min_{S\subset V, \abs{S} \leq \abs{E}}
\left\{\frac{d_{\min} \abs{\partial S}^2}{2 \abs{S}^2}\right\} \,.
\end{equation*}
Here we would like to compare the lower bound provided by the classical Cheeger inequality and our novel result.
Intuitively, the term $\frac{\abs{\partial S}^2}{\abs{S}^2}$ provides a tighter bound in the classical Cheeger inequality than in ours (because of the $\abs{S} \leq n/2$ constraint instead of $\abs{S} \leq \abs{E}$), but the term $\frac{1}{d_{\max}}$ can be much worse than the term $d_{\min}$ in our bound, which is at least $1$ if the graph is connected. 

One may criticize that in our bound $d_{\min}$ could be $0$ in some cases, making the whole bound useless. We want to clarify that it is not true. If $d_{\min} = 0$, there must exist some isolated node $i$ and the graph is disconnected. In that case the edge expansion in both versions will be $0$ by picking $S = \{i\}$, thus even the classical Cheeger inequality will not provide any insight about the spectral gap.

\textbf{Positively Weighted Graphs.} 
We now assume $G = (V, E, w)$ is a positively weighted graph, with edge weight $W_{ij} \geq 0$ for every $i\neq j$.
We also assume all non-zero edge weights in $G$ are bounded between $\alpha$ and $\beta$ with $\alpha \geq \beta > 0$.
Although the regular Cheeger constant $h(G)$ is only defined for unweighted graphs, comparison with $\phi_G$ is possible by introducing the following unweighted indicator graph
$G' = (V, E')$, where $E' = \{(i,j) \mid W_{ij} > 0\}$.
In other words, an edge in $G'$ indicates a positively weighted edge in the original graph $G$. 
Then we have
$\frac{\beta^2}{\alpha} \phi_{G'}
\leq \phi_G
\leq \frac{\alpha^2}{\beta} \phi_{G'}$
for any positively weighted graph $G$. Thus, the edge expansion of a positively weighted graph is bounded by the edge expansion of the corresponding unweighted indicator graph multiplied by a constant factor (decided by the extreme weights of the graph).

\begin{figure*}[t]
\centering

\begin{subfigure}[b]{.25\linewidth}
\includegraphics[width=\linewidth]{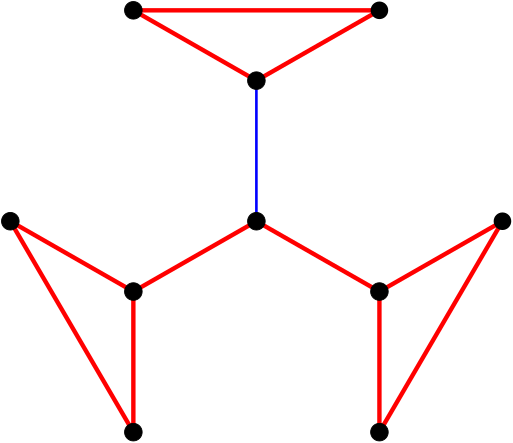}
\caption{Star graph}
\label{fig:psd_star}
\end{subfigure}
\hspace{10mm}
\begin{subfigure}[b]{.25\linewidth}
\includegraphics[width=\linewidth]{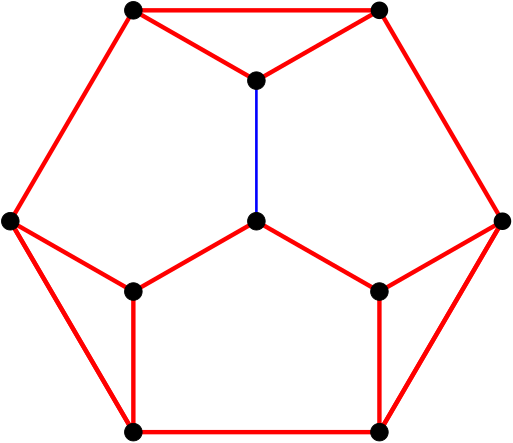}
\caption{Regular graph}
\label{fig:psd_regular}
\end{subfigure}
\hspace{10mm}
\begin{subfigure}[b]{.25\linewidth}
\centering
\includegraphics[width=0.75\linewidth]{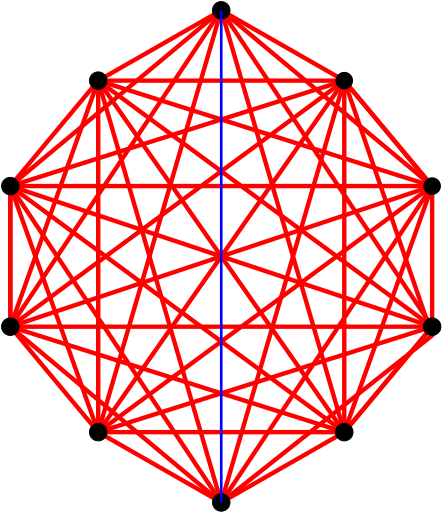}
\caption{Complete graph}
\label{fig:psd_complete}
\end{subfigure}
\caption{Signed weighted graph with one edge perturbation. In each graph, a red edge has a positive weight of $1$. The blue edge is the the perturbed edge: its weight goes from $1$ to $-1$. The signed weighted edge expansion $\phi_G$ of every graph is recorded in Table \ref{tab:graphs}.}
\label{figs:psd}
\end{figure*}


\textbf{Signed Weighted Graphs.} It would be hard to make general claims about the edge expansion if the graph edge weights are allowed to be negative. Intuitively, with more negatively weighted edges, the Laplacian is more likely to break positive semidefiniteness. Thus, the negatively weighted edges can be interpreted as perturbation in the Laplacian matrix, i.e., making non-diagonal entries positive.
This gives rise to a question: what network topology is more robust to the perturbation of negatively weighted edges, without breaking positive semidefiniteness of the Laplacian?
The question is also related to the federated myopic learning model, because without positive definiteness it would be impossible to recover the network structure using the SDP approach. 

Consider the following example graphs in Figure \ref{figs:psd}. Here every graph is a signed weighted graph with $n = 10$ nodes. A red edge is assigned a positive weight of $1$ and remains constant. The blue edge is a variable edge: its weight is assigned to be $1, 0.5, 0, -0.5$ and $1$ in each iteration respectively. We check the signed weighted edge expansion of each graph and record the values in Table \ref{tab:graphs}.
One can see that in terms of the signed weighted edge expansion $\phi_G$, the complete graph is the most robust one among three, and the star graph is the least. We also test if the graph Laplacians are positive semidefinite and mark the results in bold font. The test shows that the Laplacian of a star graph is more prone to the perturbation of a negatively weighted edge without breaking PSD, while the complete graph is less prone. 

\begin{table}[ht!]
\centering
\begin{tabular}{lccccc}
\toprule
            & \multicolumn{5}{c}{Perturbed Edge Weight}        \\ 
            \cmidrule(r){2-6}
            & 1      & 0.5    & 0      & -0.5    & -1      \\ 
\midrule
Star Graph    & 0.019 & 0.005 & 0      & \textbf{-0.167} & \textbf{-0.333} \\ 
Regular Graph & 0.167 & 0.116 & 0.074 & -0.093 & \textbf{-0.333} \\ 
Complete Graph & 1.389 & 1.334 & 1.280 & 1.180  & 1.080  \\ 
\bottomrule
\end{tabular}
\caption{Signed weighted edge expansion $\phi_G$ of every graph in Figure \ref{figs:psd} with different weights assigned to the blue edge. Graphs breaking positive semidefiniteness are marked in bold font.}
\label{tab:graphs}
\end{table}

\subsection{Server Algorithm Validation}
\label{subsection:server_experiments}

In this section, we validate the proposed Algorithm \ref{alg:central} through synthetic experiments.
In the following experiments, we fix the parameters $p, q$ and $r$ to be the same across all clients. In particular, we set $p = 0.9, q = 0.1$. The number of nodes is fixed to be $30$.
We control the field of view of each client as follows: every client randomly samples $M$ nodes, and the FOV of the client is the complete subgraph spanned by the sampled nodes.
Thus, the parameters are the size of the field of view $M$, and the number of clients $K$. These are the x-axis and y-axis in Figure \ref{figs:regimes}, respectively. 
Furthermore, we are interested in comparing two regimes. One is the \emph{multiview regime}, in which there are few clients (maximum of $20$), but every client sends an evidence graph with a high signal-to-noise ratio ($r=0.1$). The other one is the \emph{federated regime}, in which there are many clients (maximum of $200$), but every client sends a noisy evidence graph ($r=0.4$).
The experiments are run on a local machine with a Intel Core i9-10900K CPU.

\begin{figure*}[ht!]
    \centering
    
    \begin{subfigure}[b]{.45\linewidth}
    \includegraphics[width=\linewidth]{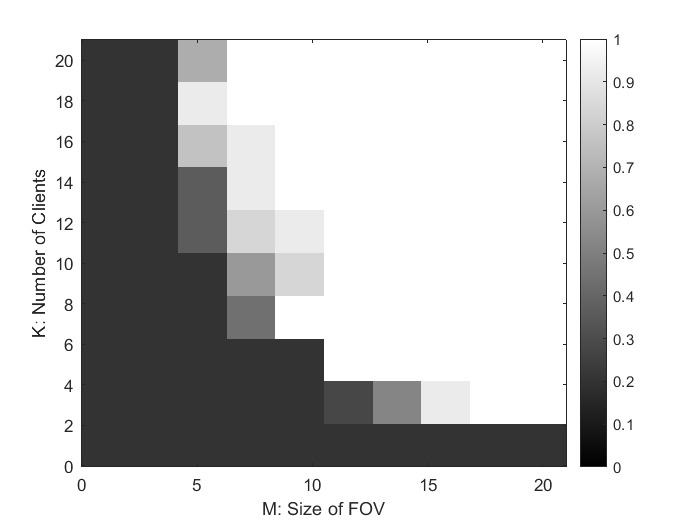}
    \caption{Multiview regime ($r = 0.1$)}
    \label{fig:r01}
    \end{subfigure}
    \hspace{10mm}
    \begin{subfigure}[b]{.45\linewidth}
    \includegraphics[width=\linewidth]{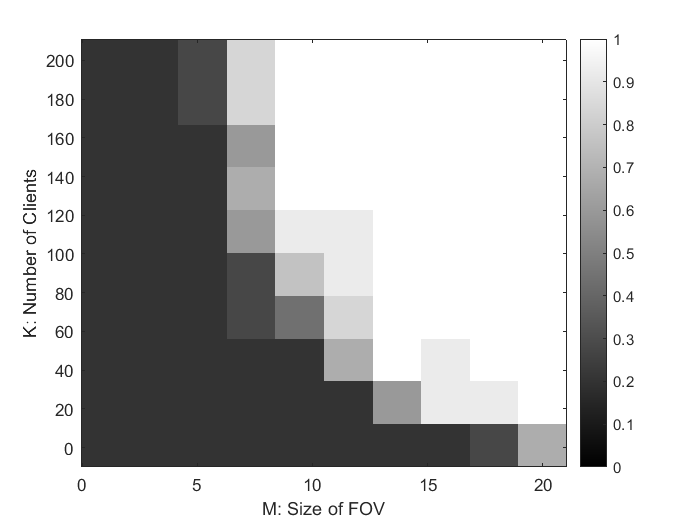}
    \caption{Federated regime ($r = 0.4$)}
    \label{fig:r04}
    \end{subfigure}
    \caption{Synthetic experiments to validate Algorithm \ref{alg:central}. The x-axis is the size of FOV of each client, and the y-axis is the number of clients in the federation. Even in the highly noisy case (federated regime), with a large number of clients, the central server is able to recover the community structure of the network perfectly. There is tradeoff between the amount of signal sent by each client (decided by the size of FOV, as well as the signal and noise levels), and the number of clients in the federation. }
    \label{figs:regimes}
    \end{figure*}

We run Algorithm \ref{alg:central} and check agreement between the recovered label vector $\hat{y}$, and the ground truth vector $y^\ast$. We solve the semidefinite program in Algorithm \ref{alg:central} using CVX \citep{cvx, gb08}. For each pair of parameters, we count how many times (out of $10$) the recovered label vector is identical to the ground truth. Our experiments show that exact recovery of the community structure can be achieved in both regimes, with a tradeoff between the signal-to-noise ratio and the number of clients.



\bibliography{0_main}
\bibliographystyle{abbrvnat}

\onecolumn
\newpage
\appendix

\section{Overview}
Before going into the proof details, we first discuss the connection between the edge expansion $\phi_{\bar{G}}$, the exact recovery task (as in Theorem \ref{thm:exact_recovery}), and the types of graphs (as in Section \ref{section:illustrative_examples}).

Recall that in our federated myopic learning model, $W$ is the weighted adjacency matrix computed by the server, as in Algorithm \ref{alg:central}.
Intuitively, for every pair $i$ and $j$, $\Expect{W_{ij}}$ represents the average connection strength between these two nodes in the network. In other words, if $\Expect{W_{ij}} > 0$, node $i$ and $j$ are more likely to be in the same community; if $\Expect{W_{ij}} < 0$, they are more likely to be in different communities. 
In the proof we will show that $\sum_{k\in [K]} s_{ij}^\kth = \Expect{W_{ij}}$.
We call $(i,j)$ a ``good'' edge if $\sum_{k\in [K]} s_{ij}^\kth y^\ast_i y^\ast_j = \Expect{W_{ij}} y^\ast_i y^\ast_j > 0$, and a ``bad'' edge if $\sum_{k\in [K]} s_{ij}^\kth y^\ast_i y^\ast_j = \Expect{W_{ij}} y^\ast_i y^\ast_j  < 0$.
By Definition \ref{def:bar_G}, the adjacency matrix of $\bar{G}$ is $\Expect{W} \circ Y^\ast$.
As a consequence, the signed consensus graph $\bar{G}$ and the corresponding edge expansion $\phi_{\bar{G}}$ are heavily related to the chance of successful community detection.
In the proof we use $L^\ast$ to denote the graph Laplacian of $\bar{G}$. We show that successful exact recovery requires that the second smallest eigenvalue of $L^\ast$ is strictly positive. By Theorem \ref{thm:cheeger}, this is fulfilled as long as the edge expansion $\phi_{\bar{G}}$ is strictly positive. 

Now we discuss two types of graphs characterized by the signed consensus graph $\bar{G}$ and the signed weight matrix $\Expect{W} \circ Y^\ast$.

\textbf{Positively Weighted Graphs.} In this case, every entry in $\Expect{W} \circ Y^\ast$ is greater than or equal to $0$. This implies that every edge in the graph is a ``good'' edge. Community detection is easy in this case, as long as the graph is connected. 
In fact, exact recovery in this case can be achieved through a greedy algorithm: the algorithm adds the first node to either community, and then adds the neighboring nodes to the same community if they are connected by a positive edge $W_{ij} > 0$, or adds them to the other community if connected by a negative edge $W_{ij} < 0$. The algorithm repeats until all nodes are assigned to a community. The edge expansion $\phi_{\bar{G}}$ is strictly positive in this case.

\textbf{Signed Weighted Graphs.} In this case, every entry in $\Expect{W} \circ Y^\ast$ is greater than, less than, or equal to $0$. Community detection is harder, since ``bad'' edges exist in this case. One can see that the simple greedy algorithm described above will not be able to find consistent assignments because of the ``bad'' edges. Just like the experiments in Figure \ref{figs:psd} and Table \ref{tab:graphs}, the chance of successful community detection depends on how many negative entries exist in $\Expect{W} \circ Y^\ast$ and how negative the entries are: recovery becomes hard if there are many ``bad'' edges, or some edges are ``really bad.'' The edge expansion $\phi_{\bar{G}}$ can be positive or negative in this case.

In the proof of Theorem \ref{thm:exact_recovery}, we will characterize the relationship between the chance of exact recovery and the edge expansion $\phi_{\bar{G}}$ in a more rigorous way.


\section{Proof of Theorem \ref{thm:cheeger}}

In this section, we provide the proof of the Cheeger-type inequality for signed weighted graphs as stated in Theorem \ref{thm:cheeger}. 
Recall that $G=(V,E,w)$ is a general weighted graph, $W$ is the weighted adjacency matrix, $D$ is the degree matrix, and $L = D-W$ is the graph Laplacian. 

First we introduce the following definitions and lemmas that will be used later in the proof.

\begin{definition}[Rayleigh Quotient]
Let $L = D-W$ be an $n \times n$ graph Laplacian matrix, where $D$ is the degree matrix, and $W$ is the weighted adjacency matrix. The \emph{Rayleigh Quotient} of a vector $v\in R^n$ with respect to $L$ is defined as 
\[
R_L(v) := \frac{v^\top L v}{v^\top v} = \frac{\sum_{i<j} W_{ij} (v_i - v_j )^2}{v^\top v} \,.
\]
Similarly, the signed Rayleigh Quotients are defined as
\[
R_L^+ (v) := \frac{\sum_{i<j} W_{ij}^+ (v_i - v_j )^2}{v^\top v} \,,
\qquad
R_L^- (v) := \frac{\sum_{i<j} W_{ij}^- (v_i - v_j )^2}{v^\top v}  \,.
\]
Note that $R_L(v) = R_L^+ (v) + R_L^- (v)$.
\end{definition}

Using the variational characterization of eigenvalues, it follows that $\lperp(L) = \min_{v\perp \onevct} R_L(v)$.

\begin{lemma}
For any non-zero $\alpha \in \R$, it follows that
\[
R_{L} (v) 
=
R_{L} (\alpha v) \,.
\]
\label{lemma:RLscaling}
\end{lemma}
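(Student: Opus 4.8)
The plan is to exploit the fact that the Rayleigh quotient is a ratio of two quadratic forms in $v$, each homogeneous of degree two, so that any scalar multiple simply cancels. First I would substitute $\alpha v$ directly into the definition $R_L(v) = \frac{v^\top L v}{v^\top v}$. Using bilinearity of the map $(x,y) \mapsto x^\top L y$, the numerator evaluated at $\alpha v$ becomes $(\alpha v)^\top L (\alpha v) = \alpha^2\, v^\top L v$, and likewise the denominator becomes $(\alpha v)^\top (\alpha v) = \alpha^2\, v^\top v$. Hence $R_L(\alpha v) = \frac{\alpha^2\, v^\top L v}{\alpha^2\, v^\top v}$.

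The key step is then to cancel the common factor $\alpha^2$ from numerator and denominator, which recovers exactly $R_L(v)$. This cancellation is legitimate precisely because the hypothesis $\alpha \neq 0$ guarantees $\alpha^2 \neq 0$; this is the only place the non-zero assumption is used. The quotient is understood to be evaluated at a non-zero $v$, so that $v^\top v > 0$ and the ratio is well-defined, and then $\alpha v$ is non-zero as well, keeping $R_L(\alpha v)$ well-defined.

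Since there is essentially no obstacle here, as a sanity check I would also verify consistency with the edge-sum expression $\sum_{i<j} W_{ij}(v_i - v_j)^2$ for the numerator: replacing $v$ by $\alpha v$ turns each difference $v_i - v_j$ into $\alpha(v_i - v_j)$, whose square again carries a factor $\alpha^2$, confirming the degree-two homogeneity from the combinatorial side and matching the matrix computation. The same argument applies verbatim to the signed quotients $R_L^+$ and $R_L^-$, since their numerators have the identical homogeneous form with $W_{ij}$ replaced by $W_{ij}^+$ or $W_{ij}^-$. The only point requiring any care is ensuring $\alpha \neq 0$ before cancelling; the statement is otherwise immediate.
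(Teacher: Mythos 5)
Your proof is correct and follows exactly the same route as the paper's: substitute $\alpha v$ into the quotient, pull out $\alpha^2$ from numerator and denominator by homogeneity, and cancel using $\alpha \neq 0$. The paper's version is a one-line computation; your additional remarks on well-definedness and the edge-sum form are sound but not needed.
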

\begin{proof}
Note that
$
R_{L} (\alpha v)
= \frac{(\alpha v )^\top L (\alpha v)}{(\alpha v )^\top (\alpha v)} 
= R_{L} (v) 
$.
\end{proof}

\begin{lemma}
For any $\delta \in \R$ and $v\in\R^n, v\perp \onevct$, it follows that
\[
R_{L}^+ (v) 
\geq 
R_{L}^+ ( v + \delta \onevct) \,.
\]
\label{lemma:RLshifting}
\end{lemma}
\begin{proof}
Starting from the right-hand side, we have
\begin{align*}
R_{L}^+ ( v + \delta \onevct)
&= \frac{\sum_{i<j} W_{ij}^+ \left(( v_i + \delta)-( v_j + \delta)\right)^2}{\sum_{i}  \left( v_i + \delta \right)^2} \\
&= \frac{ \sum_{i<j} W_{ij}^+ \left(v_i - v_j \right)^2}{\sum_{i}  \left( v_i + \delta \right)^2} \\
&= \frac{ \sum_{i<j} W_{ij}^+ \left(v_i - v_j \right)^2}{\sum_{i}  \left(  v_i^2 + \delta^2 + 2\delta v_i \right)} \\
&= \frac{ \sum_{i<j} W_{ij}^+ \left(v_i - v_j \right)^2}{ \sum_{i} v_i^2 + n\delta^2 + 2\delta \sum_i v_i } \\
&= \frac{ \sum_{i<j} W_{ij}^+ \left(v_i - v_j \right)^2}{ \sum_{i} v_i^2 + n\delta^2} \\
&\leq \frac{ \sum_{i<j} W_{ij}^+ \left(v_i - v_j \right)^2}{ \sum_{i} v_i^2} \\
&= R_{L}^+ (v) \,.
\end{align*}
\end{proof}

We now proceed to prove the main theorem. 
Our proof takes a constructive approach: if we can construct any witness set $S, S' \subset V$ with $d^+(S) \leq d^+(V)/2$ fulfilling
$ \lperp(L) \geq \frac{1}{2}d^+_{\min} (\phi^+_S)^2
+ 2 \phi^-_{S'}$, then it follows from Definition \ref{def:cheeger_constant} that $\lperp(L) \geq \phi_G = 
\frac{1}{2}d^+_{\min} \min_{S \subset V, d^+(S) \leq d^+(V)/2}  (\phi^+_{S})^2
+ 2 \min_{S \subset V} \phi^-_S$.

\begin{proof}[Proof of Theorem \ref{thm:cheeger}]
Suppose $v$ is the eigenvector associated with the eigenvalue $\lperp(L)$. By definition, it follows that $v \perp \onevct$. By Lemma \ref{lemma:RLscaling}, without loss of generality we assume that the eigenvector $v$ has unit norm, i.e., $\norm{v} = 1$. We also assume that $v$ is sorted in ascending order, i.e., $v_1 \leq \dots \leq v_n$. 

Let $m$ be the smallest integer, such that 
$\sum_{i=1}^{m+1} d^+(i) > \frac{1}{2} d^+(V)$.
Then $v - v_m \onevct$ is centered at $m$. We can find $\alpha, \alpha' > 0$, such that 
\[
\alpha^2 (v_1-v_m)^2 + \alpha^2 (v_n-v_m)^2 = 1 \,, \qquad
\alpha'^2 v_1^2 + \alpha'^2 v_n^2 = 1 \,.
\]
Denote $u = \alpha(v - v_m\onevct)$, and $u' = \alpha' v$. By Lemma \ref{lemma:RLscaling} and \ref{lemma:RLshifting}, it follows that $R_L^+(u) \leq R_L^+(v)$, and $R_L^-(u') = R_L^-(v)$.
Our goal is to prove
\begin{equation}    
\lperp(L) = R_L(v) = R_L^+(v) + R_L^-(v) 
\geq R_L^+(u) + R_L^-(u')
\geq \frac{1}{2}d^+_{\min} (\phi^+_S)^2
+ 2 \phi^-_{S'} 
\,.
\label{eq:cheeger_ineq_derivation}
\end{equation}
for some $S, S' \subset V$ with $d^+(S) \leq d^+(V)/2$.

We first prove the positive part $R_L^+(u) \geq \frac{1}{2}d^+_{\min} (\phi^+_S)^2$ of \eqref{eq:cheeger_ineq_derivation}. The proof of the negative part will be similar. 

We define a random variable $t$ on the support $[u_1, u_n]$, with probability density function $f(t) = 2\abs{t}$. One can verify that $\int_{t = u_1}^{u_n} 2\abs{t} = 1$ because $u_1^2 + u_n^2 = 1$, thus $f(t)$ is a valid probability density function.
Then, for any interval $[a,b]$, it follows that the probability of $t$ falling in the interval is 
\[
\Prob{a \leq t \leq b}
=
\int_{t=a}^b 2\abs{t} = - a^2\sgn{a} + b^2\sgn{b} 
 \,.
\]
It can be verified that
\begin{equation}
(a-b)^2 / 2
\leq 
\Prob{a \leq t \leq b}
\leq
\abs{a-b} (\abs{a}+\abs{b}) \,,
\label{eq:cheeger_set_prob_bound}
\end{equation}
in the range $[-1,1]$.
We construct a random set $S_t := \{i \mid u_i \leq t\}$. 
Note that we have $\sum_{i=1}^m d^+(i) \geq \frac{1}{2} d^+(V)$, and $u_m = 0$. As a result, if $t \leq 0$, we have $\min(d^+({S_t}), d^+({V\setminus S_t})) = d^+({S_t})$, otherwise $\min(d^+({S_t}), d^+({V\setminus S_t})) = d^+({V\setminus S_t})$. By Definition \ref{def:node_and_set_degree}, we obtain
\[
\Expect{d^+ (S_t)}
= \sum_i \Prob{u_i \leq t} d^+(i)  \,, \qquad
\Expect{d^+ (V \setminus S_t)}
= \sum_i \Prob{u_i \geq t} d^+(i)  \,.
\]
It follows that
\begin{align*}
\Expect{\min(d^+({S_t}), d^+({V\setminus S_t}))}
&= \sum_{i \leq m} \Prob{u_i \leq t \leq 0} d^+(i) 
+ \sum_{i > m} \Prob{u_i \geq t >  0} d^+(i)  \\
&= \sum_i u_i^2 d^+(i) \,.
\end{align*}

We now analyze the positive boundary weight of $S_t$. It follows that
\begin{align*}
\Expect{w^+ (\partial S_t)} 
&= \Expect{\sum_{i\in S_t, j\notin S_t} W_{ij}^+} \\
&= \sum_{i<j} \Prob{u_i \leq t \leq u_j} W_{ij}^+ \\
&\leq \sum_{i<j} \abs{u_i-u_j} (\abs{u_i}+\abs{u_j})   W_{ij}^+ \\
&\overset{(a)}{\leq} \sqrt{\sum_{i<j} (u_i-u_j)^2 W_{ij}^+} 
\sqrt{\sum_{i<j}(\abs{u_i}+\abs{u_j})^2 W_{ij}^+} \\
&= \sqrt{R_L^+(u) \sum_{i} u_i^2} 
\sqrt{\sum_{i<j}(\abs{u_i}+\abs{u_j})^2 W_{ij}^+} \\
&\overset{(b)}{\leq} \sqrt{\frac{R_L^+(u)}{d^+_{\min}} \sum_{i} u_i^2 d^+(i)} 
    \sqrt{2\sum_{i} u_i^2 d^+(i)}  \\
&= \sqrt{\frac{2R_L^+(u)}{d^+_{\min}}  }  \cdot \Expect{\min(d^+({S_t}), d^+({V\setminus S_t}))}\,,
\end{align*}
where (a) follows from the Cauchy-Schwarz inequality, and (b) follows from the definition of node degree and picking the minimum positive node degree.
Rearranging the inequality above gives us
\[
\Expect{\sqrt{\frac{2R_L^+(u)}{d^+_{\min}}} \min(d^+({S_t}), d^+({V\setminus S_t})) - w^+ (\partial S_t)} \geq 0 \,.
\]
Note that inside the expectation is a function of $t$.
Thus, there exists some $t$ fulfilling
\[
\sqrt{\frac{2R_L^+(u)}{d^+_{\min}}} \min(d^+({S_t}), d^+({V\setminus S_t})) - w^+ (\partial S_t) \geq 0\,,
\]
which leads to
\begin{equation}
R_L^+(u) \geq \frac{1}{2}d^+_{\min} (\phi^+_{S_t})^2 = \frac{1}{2}d^+_{\min} \left(\frac{w^+ (\partial S_t)}{d^+(S_t)} \right)^2    \,,
\label{eq:cheeger_rlplus}
\end{equation}
under the condition of $d^+(S_t) \leq d^+(V)/2$.

The proof of the negative part of \eqref{eq:cheeger_ineq_derivation} is similar. Instead of using the random variable $t$ on the support $[u_1, u_n]$, we define a new random variable $t'$ on the support $[u'_1, u'_n]$. We use the same probability density function $f(t') = 2\abs{t'}$, so \eqref{eq:cheeger_set_prob_bound} still holds for $t'$. 
For the negative boundary weight, we have
\begin{align*}
\Expect{w^- (\partial S_{t'})} 
&= \Expect{\sum_{i\in S_{t'}, j\notin S_{t'}} W_{ij}^-} \\
&= \sum_{i<j} \Prob{u'_i \leq t' \leq u'_j} W_{ij}^- \\
&\overset{(a)}{\leq} \frac{1}{2}\sum_{i<j} (u'_i - u'_j)^2    W_{ij}^- \\
&= \frac{1}{2}R_L^-(u') \sum_i {u'}_i^{2} \\
&\overset{(b)}{\leq} \frac{1}{2}R_L^-(u') \,.
\end{align*}
where (a) follows from the left-hand side of  \eqref{eq:cheeger_set_prob_bound}, and (b) follows from the fact that $\sum_i {u'}_i^{2} \geq {u'}_1^{2} + {u'}_n^{2} = 1$.
Thus, there exists some $t'$ fulfilling
\begin{equation}
R_L^-(u') \geq 2\phi^-_{S_{t'}} = 2 w^- (\partial S_{t'})      \,.
\label{eq:cheeger_rlminus}
\end{equation}

Combining \eqref{eq:cheeger_rlplus} and \eqref{eq:cheeger_rlminus} completes our proof of \eqref{eq:cheeger_ineq_derivation}.
\end{proof}


\section{Proof of Theorem \ref{thm:exact_recovery}}
For every client $k$ with evidence graph $\tilde{G}^\kth$, we introduce a weighted matrix $W^\kth$ defined as follows. For every pair $(i,j)$, if $(i,j) \notin \Omega^\kth$ we assign $W^\kth_{ij} = 0$. If $(i,j) \in \Omega^\kth$ and $(i,j)$ is an edge in $\tilde{G}^\kth$, we assign $W^\kth_{ij} = 1$; otherwise we assign $W^\kth_{ij} = -1$.
From Algorithm \ref{alg:central}, one can see that the server weighted adjacency matrix is the summation of the clients' weighted matrices, i.e., 
$
W = \sum_{k\in [K]} W^\kth
$.

We take a primal-dual witness approach to show that the SDP \eqref{opt:primal} achieves exact recovery. 
For the readers' convenience, here we restate the SDP \eqref{opt:primal}:
\begin{align*}
\maximize_Y \qquad &\inprod[W]{Y} \nonumber\\
\st \qquad & Y_{ii} = 1\,, Y \succeq 0 \,.
\end{align*}
Suppose the Lagrangian dual variables are $\Lambda$ (for $Y_{ii} = 1$) and $B$ (for $Y \succeq 0$), respectively, where $\Lambda$ is a diagonal matrix of size $\R^{n\times n}$, and $B \succeq 0$ is a PSD matrix of the same size.
The Lagrangian dual problem of \eqref{opt:primal} is
\begin{align}
\minimize_\Lambda \qquad &\tr{\Lambda} \nonumber\\
\st \qquad & \Lambda \text{ is diagonal}\,, \Lambda-W \succeq 0 \,.
\label{opt:dual}
\end{align}

We now list the Karush–Kuhn–Tucker (KKT) conditions for a primal and dual pair $(Y, \Lambda, B)$ to be optimal.
\begin{align}
\Lambda - W - B &= 0 \,, \tag{Stationarity} \label{kkt:stationarity} \\
Y_{ii} = 1\,, \quad Y &\succeq 0 \,, \tag{Primal Feasibility} \label{kkt:pf} \\
\Lambda \text{ is diagonal}\,, \quad B &\succeq 0\,, \tag{Dual Feasibility} \label{kkt:df} \\
\inprod[B]{Y} &= 0 \,. \tag{Complementary Slackness} \label{kkt:cs}
\end{align}

To guarantee $Y^\ast = \yyast$ is an optimal solution to the SDP \eqref{opt:primal}, all KKT conditions need to be fulfilled.
First note that $Y^\ast$ fulfills \eqref{kkt:pf} trivially.
Next, combining \eqref{kkt:stationarity} and \eqref{kkt:cs}, we obtain that an optimal solution must fulfill
\begin{equation}
\inprod[\Lambda -W]{Y^\ast} = 0 \,.
\label{opt:inprodzero}
\end{equation}
To fulfill \eqref{opt:inprodzero}, we can construct the dual variables $\Lambda^\ast$ and $B$ as follows: $\Lambda^\ast_{ii} := \sum_{j\neq i} W_{ij} y^\ast_i y^\ast_j$ for every $i\in [n]$, and $B^\ast := \Lambda^\ast - W$.
Then it only remains to prove that our construction $(Y^\ast, \Lambda^\ast, B^\ast)$ fulfills \eqref{kkt:df} and \eqref{kkt:cs}, i.e.,
\begin{equation*}
B^\ast = \Lambda^\ast - W \succeq 0\,.
\end{equation*}
One can verify that by our constriction, $y^\ast$ is always an eigenvector of $\Lambda^\ast - W$ with the corresponding eigenvalue of $0$. Thus the inequality above is equivalent to 
\begin{equation}
\lsec(\Lambda^\ast - W) \geq 0\,,
\label{opt:dual_feasibility}
\end{equation}

The KKT conditions, once fulfilled, guarantee that $Y^\ast = \yyast$ is an optimal solution to the primal problem. 
However there could exist other sets of primal and dual variables satisfy all KKT conditions above. To illustrate this, we construct a set of example primal and dual variables $(\tilde{Y}, \tilde{\Lambda}, \tilde{B})$ as follows: $\tilde{Y} := \onemtx$ is the all-one matrix, $\tilde{\Lambda}_{ii} := \sum_{j\neq i} W_{ij}$ for every $i\in [n]$, and $\tilde{B} := \tilde{\Lambda} - W$.
One can verify that $(\tilde{Y}, \tilde{\Lambda}, \tilde{B})$ fulfill all KKT conditions above, and as a result, $\tilde{Y} = \onemtx$ is an optimal solution to the primal SDP.

To ensure that $Y^\ast = \yyast$ is the \emph{unique} optimal solution to \eqref{opt:primal} and eliminate all other undesirable solutions, we present the following lemma about uniqueness.

\begin{lemma}[Uniqueness Condition]
The SDP \eqref{opt:primal} achieves exact recovery and returns the unique optimal solution $Y^\ast = \yyast$, if 
\[
\lsec(\Lambda^\ast - W) > 0 \,,
\]
where $\Lambda^\ast$ is a diagonal matrix, such that $\Lambda^\ast_{ii} = \sum_{j\neq i} W_{ij} y^\ast_i y^\ast_j$ for every $i\in [n]$.
\label{lemma:opt_uniqueness}
\end{lemma}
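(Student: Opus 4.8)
The plan is to finish the primal--dual witness argument begun above: the certificate $(\Lambda^\ast, B^\ast)$ with $B^\ast = \Lambda^\ast - W$ has already been constructed so that stationarity and complementary slackness hold for $Y^\ast = \yyast$, so it only remains to convert the strict spectral hypothesis $\lsec(\Lambda^\ast - W) > 0$ into (i) dual feasibility and (ii) uniqueness. The first step I would make is the observation that $y^\ast$ is already known to be an eigenvector of $B^\ast$ with eigenvalue $0$; since $\lsec(B^\ast) > 0$ forces the second-smallest eigenvalue to be strictly positive, the zero eigenvalue must be the smallest one and it must be \emph{simple}. Hence $B^\ast \succeq 0$, so \eqref{kkt:df} holds, and more importantly its null space is exactly the one-dimensional line $\mathrm{span}(y^\ast)$. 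This single spectral fact drives both optimality and uniqueness.

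For optimality I would use weak duality directly. Writing $W = \Lambda^\ast - B^\ast$ and using that $\Lambda^\ast$ is diagonal together with the primal constraint $Y_{ii} = 1$, every feasible $Y$ satisfies $\inprod[W]{Y} = \inprod[\Lambda^\ast]{Y} - \inprod[B^\ast]{Y} = \tr{\Lambda^\ast} - \inprod[B^\ast]{Y}$. Because $B^\ast \succeq 0$ and $Y \succeq 0$ we have $\inprod[B^\ast]{Y} = \tr{B^\ast Y} \geq 0$, so $\inprod[W]{Y} \leq \tr{\Lambda^\ast}$, where the constant $\tr{\Lambda^\ast}$ is independent of $Y$. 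Evaluating at $Y^\ast$ gives $\inprod[B^\ast]{Y^\ast} = y^{\ast\top} B^\ast y^\ast = 0$, so $Y^\ast$ attains the upper bound and is therefore a maximizer.

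The crux, and the step I expect to require the most care, is uniqueness. Let $Y$ be any optimal solution; then the chain of inequalities above must be tight, forcing $\tr{B^\ast Y} = 0$ with both $B^\ast$ and $Y$ positive semidefinite. I would then invoke the standard fact that for two PSD matrices a zero trace of their product forces the product itself to vanish, i.e. $B^\ast Y = 0$ (spectrally: decomposing $B^\ast = \sum_i \lambda_i u_i u_i^\top$, each term $\lambda_i\, u_i^\top Y u_i$ is nonnegative and they sum to zero, so $Y u_i = 0$ for every positive eigenvalue $\lambda_i$). Consequently the column space of $Y$ lies inside $\ker B^\ast$, which by the first step equals $\mathrm{span}(y^\ast)$; hence $Y = c\, \yyast$ for a scalar $c$. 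Finally the diagonal constraint $Y_{ii} = 1$ together with $(y^\ast_i)^2 = 1$ pins down $c = 1$, giving $Y = \yyast = Y^\ast$ and establishing uniqueness. The delicate points are justifying the PSD trace-zero-implies-product-zero claim rigorously and confirming that it is the \emph{strict} inequality $\lsec(B^\ast) > 0$, rather than $\lsec(B^\ast) \geq 0$, that collapses the null space to a single line and thereby rules out spurious optimizers such as $\onemtx$.
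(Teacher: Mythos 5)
Your proof is correct and takes essentially the same primal--dual witness route as the paper: both use the certificate $B^\ast = \Lambda^\ast - W$, the fact that $B^\ast y^\ast = 0$, and the strict gap $\lsec(\Lambda^\ast - W) > 0$ to collapse $\ker B^\ast$ to $\mathrm{span}(y^\ast)$, forcing every optimizer to be a multiple of $\yyast$ and then pinning it to $Y^\ast$ via the diagonal constraint. If anything, your write-up is more self-contained than the paper's: the weak-duality chain $\inprod[W]{Y} = \tr{\Lambda^\ast} - \inprod[B^\ast]{Y} \leq \tr{\Lambda^\ast}$ and the explicit PSD trace-zero argument rigorously justify the step that the paper compresses into the assertion that any optimal solution must satisfy \eqref{opt:inprodzero} with this particular dual certificate.
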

\begin{proof}
First note that $(Y^\ast, \Lambda^\ast, B^\ast)$ constructed above fulfill all KKT conditions given that $\lsec(\Lambda^\ast - W) \geq 0$. Thus $Y^\ast = \yyast$ is an optimal solution to the SDP \eqref{opt:primal}.

Now, by enforcing the strict inequality $\lsec(\Lambda^\ast - W) > 0$, we obtain that for every vector $v$ that is not a multiple of $y^\ast$, we have $\inprod[\Lambda^\ast - W]{vv^\top} > 0$ strictly.
Thus to fulfill the optimality condition \eqref{opt:inprodzero}, the solution $Y$ must be a multiple of $Y^\ast$. Furthermore, given the constraint $Y_{ii} = 1$ \eqref{kkt:pf}, the only possible optimal solution that fulfills all KKT conditions is $Y^\ast$.
\end{proof} 


We now begin to prove the main theorem.
\begin{proof}[Proof of Theorem \ref{thm:exact_recovery}]
We first investigate the value of $W_{ij}^\kth$. If $(i,j)$ is not observed by client $k$, the value of $W_{ij}^\kth$ is $0$. Otherwise, if $Y_{ij}^\ast = 1$, we obtain that $W_{ij}^\kth = 1$ with probability $p^\kth (1-r^\kth) + (1-p^\kth) r^\kth = s_{ij}^\kth$, and $W_{ij}^\kth = -1$ with probability $1- s_{ij}^\kth$.
Similarly, if $Y_{ij}^\ast = -1$, we obtain that $W_{ij}^\kth = 1$ with probability $q^\kth (1-r^\kth) + (1-q^\kth) r^\kth = s_{ij}^\kth$, and $W_{ij}^\kth = -1$ with probability $1- s_{ij}^\kth$.

Our goal is to prove $\lsec(\Lambda^\ast - W) > 0$. Note that 
\begin{align}
\quad\lsec(\Lambda^\ast - W)  
= \lsec(\Lambda^\ast - W - \Expect{\Lambda^\ast - W} + \Expect{\Lambda^\ast - W}) &> 0 \nonumber \\
\Leftarrow \lsec(\Lambda^\ast -\Expect{\Lambda^\ast}) + \lsec(\Expect{W} - W) + \lsec(\Expect{\Lambda^\ast - W}) &> 0 \nonumber \\
\Leftarrow \lambda_1(\Lambda^\ast -\Expect{\Lambda^\ast}) + \lambda_1(\Expect{W} - W) + \lsec(\Expect{\Lambda^\ast - W}) &> 0 \,.
\label{eq:three_terms}
\end{align}
Thus it is sufficient to prove \eqref{eq:three_terms} holds with high probability.

First we bound the third term in \eqref{eq:three_terms}. We introduce a matrix $L^\ast$ defined as $L^\ast := (\Lambda^\ast - W) \circ Y^\ast$, where $\circ$ denotes the Hadamard product. It is worth mentioning that $\Lambda^\ast - W$ and $L^\ast$ share the same eigenvalues, since 
$L^\ast
=(\Lambda^\ast - W) \circ Y^\ast 
= \Lambda^\ast \circ Y^\ast - W \circ Y^\ast
= \Lambda^\ast - W \circ Y^\ast$.
Furthermore, $L^\ast$ satisfies the definition of graph Laplacians, such that 
$L_{ii} = \sum_{j\neq i} L_{ij} = \sum_{j\neq i} W_{ij} Y^\ast_{ij}$ 
for every $i\in V$, and 
$L_{ij} = - W_{ij} Y^\ast_{ij}$ 
for every $i\neq j$.
Then by Definition \ref{def:bar_G}, one can see that $\Expect{L^\ast}$ is the Laplacian of graph $\bar{G}$, with the weighted adjacency matrix $\Expect{W} \circ Y^\ast$.
By Theorem \ref{thm:cheeger}, it follows that as long as $\phi_{\bar{G}} > 0$, we have $\lperp(\Expect{L^\ast}) \geq \phi_{\bar{G}} > 0$.
Since $\onevct$ is always an eigenvector of $\Expect{L^\ast}$ with the eigenvalue $0$, this implies that all other eigenvalues of $\Expect{L^\ast}$ are strictly positive, which leads to the conclusion that 
\[
\lsec(\Expect{L^\ast}) = \lperp(\Expect{L^\ast}) \geq \phi_{\bar{G}} > 0 \,.
\]
Since  $\Lambda^\ast - W$ and $L^\ast$ share the same eigenvalues, we obtain 
\begin{equation}
\lsec(\Expect{\Lambda^\ast - W}) > \phi_{\bar{G}} \,.
\label{eq:expectation_bound}
\end{equation}

Next we bound the first term in \eqref{eq:three_terms}. For every $i, j \in V, k \in [K]$, we construct a matrix $\Phi^{(i,j,k)} \in \R^{n\times n}$ as follows: 
\[
\Phi^{(i,j,k)}_{ii} := W_{ij}^\kth Y^\ast_{ij} \,, \qquad
\Phi^{(i,j,k)}_{jj} := W_{ij}^\kth Y^\ast_{ij} \,,
\]
and all other entries are $0$. 
Then it follows that 
\[
\Lambda^\ast -\Expect{\Lambda^\ast}
= \sum_{i,j\in V, k\in [K]} 
\Phi^{(i,j,k)} - \Expect{\Phi^{(i,j,k)}} \,.
\]
To bound the minimum eigenvalue of the expression above, we use the matrix Bernstein inequality \citep{tropp2012user}. 
Note that each matrix $\Phi^{(i,j,k)} - \Expect{\Phi^{(i,j,k)}}$ is independent, has zero mean, and the maximum eigenvalue is bounded above by $1$. 
Also, since $W_{ij}^\kth$ is a $+1$/$-1$ random variable with parameter $s_{ij}^\kth$, we obtain that 
\[
\norm{\sum_{i,j\in V, k\in [K]} 
\Expect{\left(\Phi^{(i,j,k)} - \Expect{\Phi^{(i,j,k)}}\right)^2}}
\leq 
4 \max_{i\in V} \abs{\sum_{j\in V,k\in [K]} s^\kth_{ij} (1-s^\kth_{ij})} \,.
\]
We denote the right-hand side expression above by $\sigma_{\Phi}^2$. Applying the matrix Bernstein inequality, we obtain that for every $t >0$, 
\begin{align*}
\Prob{\lambda_1(\Lambda^\ast - \Expect{\Lambda^\ast}) \leq -t}
&\leq n \exp\left(\frac{-t^2/2}{\sigma_{\Phi}^2+t/3}\right) \,.
\end{align*}
By setting $t = \phi_{\bar{G}} / 2$ and requiring the probability to be bounded above by $n^{-1}$, we obtain that
\begin{equation}
\sigma_\Phi^2 \leq \frac{3\phi_{\bar{G}}^2 - 8\phi_{\bar{G}} \log n}{48\log n} \,.
\label{eq:first_term_bound}
\end{equation}

Finally we bound the second term in \eqref{eq:three_terms}. For every $i, j \in V, k \in [K]$, we construct a matrix $\Psi^{(i,j,k)} \in \R^{n\times n}$ as follows: 
\[
\Psi^{(i,j,k)}_{ij} := W_{ij}^\kth Y^\ast_{ij} \,, \qquad
\Psi^{(i,j,k)}_{ji} := W_{ij}^\kth Y^\ast_{ij} \,,
\]
and all other entries are $0$. 
Then it follows that 
\[
\Expect{W} - W
= \sum_{i,j\in V, k\in [K]} 
\Expect{\Psi^{(i,j,k)}} - \Psi^{(i,j,k)} \,.
\]
To bound the minimum eigenvalue of the expression above, we use the matrix Bernstein inequality again. 
Similar to the previous case, each matrix $ \Expect{\Psi^{(i,j,k)}} - \Psi^{(i,j,k)}$ is independent, has zero mean, and the maximum eigenvalue is bounded above by $1$. 
Also, since $W_{ij}^\kth$ is a $+1$/$-1$ random variable with parameter $s_{ij}^\kth$, we obtain that 
\[
\norm{\sum_{i,j\in V, k\in [K]} 
\Expect{\left(\Expect{\Psi^{(i,j,k)}} - \Psi^{(i,j,k)}\right)^2}}
\leq 
4 \max_{i\in V} \abs{\sum_{j\in V,k\in [K]} s^\kth_{ij} (1-s^\kth_{ij})} \,.
\]
Note that the right-hand side expression above is exactly the same as the one in the previous case. 
Thus by invoking the matrix Bernstein inequality, setting $t = \phi_{\bar{G}} / 2$, and requiring the probability to be bounded above by $n^{-1}$, we obtain the same condition
\begin{equation}
\sigma_\Phi^2 \leq \frac{3\phi_{\bar{G}}^2 - 8\phi_{\bar{G}} \log n}{48\log n} \,.
\label{eq:second_term_bound}
\end{equation}

Combining the results of \eqref{eq:expectation_bound}, \eqref{eq:first_term_bound}, and \eqref{eq:second_term_bound}, we obtain that $\lsec(\Lambda^\ast - W) > 0$ holds with probability at least $1-2n^{-1}$, as long as 
\[
4 \max_{i\in V} \abs{\sum_{j\in V,k\in [K]} s^\kth_{ij} (1-s^\kth_{ij})} 
\leq 
\frac{3\phi_{\bar{G}}^2 - 8\phi_{\bar{G}} \log n}{48\log n} \,.
\]
This completes our proof.
\end{proof}


\section{Proof of Theorem \ref{thm:impossible}}

\begin{proof}[Proof of Theorem \ref{thm:impossible}]
We use $\mathcal{Y}^\kth$ to denote the hypothesis class of the local community structure of client $k$. One can see that the size of the hypothesis space is
$
\abs{\mathcal{Y}^\kth} = 2^{n^\kth}
$.
By Fano's inequality \citep{cover1999elements}  for any estimator $\hat{Y}^\kth$, it follows that
\begin{align*}
\Prob{\hat{Y}^\kth \neq Y^{\kth \ast}} 
&\geq 1 - \frac{I(Y^{\kth \ast}, W^\kth) + \log 2}{\log \mathcal{Y}^\kth}
= 1 - \frac{I(Y^{\kth \ast}, W^\kth) + \log 2}{n^\kth \log 2} \,,
\end{align*}
where $I(Y^{\kth \ast}, W^\kth)$ denotes the mutual information between the two matrices. 
We use $\mathbb{KL}(P_1 \Vert P_2)$ to denote the Kullback–Leibler divergence between two distributions $P_1$ and $P_2$.
Using a pairwise KL-based bound \citep[p. 428]{yu1997assouad} we obtain
\begin{align*}
I(Y^{\kth\ast}, W^\kth)
&\leq \frac{1}{\abs{\mathcal{Y}^\kth}^2} \sum_{Y^\kth, Y^{\kth '} \in \mathcal{Y}} \mathbb{KL} (P_{W^\kth \mid Y^\kth} \Vert P_{W^\kth \mid Y^{\kth '}})\\
&\leq \max_{Y^\kth, Y^{\kth '} \in \mathcal{Y}} \mathbb{KL} (P_{W^\kth \mid Y^\kth} \Vert P_{W^\kth \mid Y^{\kth '}})\\
&= \max_{Y^\kth, Y^{\kth '} \in \mathcal{Y}} \mathbb{KL} (P_{W^\kth \mid Y^\kth} \Vert P_{W^\kth \mid Y^{\kth '}})\\
&= \max_{Y^\kth, Y^{\kth '} \in \mathcal{Y}} \sum_{W^\kth} P_{W^\kth \mid Y^\kth} \log \frac{P_{W^\kth \mid Y^{\kth }}}{P_{W^\kth \mid Y^{\kth '}}}\\
&\overset{(a)}{\leq} \abs{\Omega^\kth} \cdot \max_{Y_{ij}, Y_{ij}'} \sum_{W_{ij}^\kth} P_{W_{ij}^\kth \mid Y_{ij}^\kth} \log \frac{P_{W^\kth \mid Y_{ij}^{\kth }}}{P_{W^\kth \mid Y_{ij}^{\kth '}}}\\
&\overset{(b)}{\leq} \abs{\Omega^\kth} \cdot \max\left(s^\kth_+ \log \frac{s^\kth_+}{s^\kth_-} + (1-s^\kth_+) \log \frac{1-s^\kth_+}{1-s^\kth_-},
s^\kth_- \log \frac{s^\kth_-}{s^\kth_+} + (1-s^\kth_-) \log \frac{1-s^\kth_-}{1-s^\kth_+}\right) \\
&= \abs{\Omega^\kth} \cdot \max\left(\mathbb{KL}(s^\kth_+ \Vert s^\kth_-), \mathbb{KL}(s^\kth_- \Vert s^\kth_+)\right) \,,
\end{align*}
where $P_{W^\kth \mid Y^\kth}$ denotes the distribution of $W^\kth$ given $Y^\kth$, and (a) follows from an entrywise decomposition. (b) holds by the fact that given $Y_{ij} = 1$, $W_{ij}^\kth = 1$ with probability $p^\kth (1-r^\kth) + (1-p^\kth) r^\kth = s_{+}^\kth$, and $W_{ij}^\kth = -1$ with probability $1- s_{+}^\kth$;
similarly, given $Y_{ij}^\ast = -1$, $W_{ij}^\kth = 1$ with probability $q^\kth (1-r^\kth) + (1-q^\kth) r^\kth = s_{ij}^\kth$, and $W_{ij}^\kth = -1$ with probability $1- s_{ij}^\kth$.

Next we give a upper bound for $\max\left(\mathbb{KL}(s^\kth_+ \Vert s^\kth_-), \mathbb{KL}(s^\kth_- \Vert s^\kth_+)\right)$. Note that
$\mathbb{KL}(s^\kth_+ \Vert s^\kth_-) 
\leq 
\frac{(s^\kth_+ - s^\kth_-)^2}{s^\kth_- (1-s^\kth_-)}
= 
\frac{(1-2r^\kth)^2 (p^\kth - q^\kth)^2}{s^\kth_- (1-s^\kth_-)}$,
and
$\mathbb{KL}(s^\kth_- \Vert s^\kth_+) 
\leq 
\frac{(s^\kth_- - s^\kth_+)^2}{s^\kth_+ (1-s^\kth_+)}
= 
\frac{(1-2r^\kth)^2 (p^\kth - q^\kth)^2}{s^\kth_+ (1-s^\kth_+)}$. This lead to 
\[
\max\left(\mathbb{KL}(s^\kth_+ \Vert s^\kth_-), \mathbb{KL}(s^\kth_- \Vert s^\kth_+)\right)
\leq 
\frac{(1-2r^\kth)^2 (p^\kth - q^\kth)^2}{\min\left(s^\kth_+ (1-s^\kth_+), s^\kth_- (1-s^\kth_-)\right)} \,.
\]

Now going back to the Fano's inequality, we obtain
\begin{align*}
\Prob{\hat{Y}^\kth \neq Y^{\kth \ast}} 
&\geq 1 - \frac{I(Y^{\kth \ast}, W^\kth) + \log 2}{n^\kth \log 2} \\
&\geq 1 - \frac{\abs{\Omega^\kth} \cdot \frac{(1-2r^\kth)^2 (p^\kth - q^\kth)^2}{\min\left(s^\kth_+ (1-s^\kth_+), s^\kth_- (1-s^\kth_-)\right)} + \log 2}{n^\kth \log 2} \\
&\geq \frac{1}{2}\,,
\end{align*}
where last inequality holds given that 
$\frac{(1-2r^\kth)^2 (p^\kth - q^\kth)^2}{\min\left(s^\kth_+ (1-s^\kth_+), s^\kth_- (1-s^\kth_-)\right)}
\leq 
\frac{n^\kth}{2 \abs{\Omega^\kth}}$. This completes our proof.
\end{proof}


\end{document}